\newcommand{\alglinelabel}{%
  \addtocounter{ALC@line}{-1}
  \refstepcounter{ALC@line}
  \label
}
\newcommand{\RR}{\mathbb{R}}
\newcommand{\NN}{\mathbb{N}}
\newcommand{\EE}{\mathbb{E}}
\newcommand{\lipfns}{\mathcal{F}_{L}}
\newcommand{\lipfnsone}{\mathcal{F}_{1}}
\newcommand{\shift}{\mathrm{shift}}
\newcommand{\partsmash}[2][tb]{%
  \def\mb@t{\ht\z@ #2\ht\z@}\def\mb@b{\dp\z@ #2\dp\z@}%
  \def\mb@tb{\mb@t \mb@b}%
  \edef\finsm@sh{\csname mb@#1\endcsname\box\z@}%
  \ifmmode \@xp\mathpalette\@xp\mathsm@sh
  \else \@xp\makesm@sh
  \fi}
\newcommand{\transpose}{\mathrm{T}}
\newcommand{\Xstar}{X^{\star}}
\newcommand{\Sstar}{S^{\star}}
\newcommand{\spatialdomain}{\mathcal{S}}
\newcommand{\Xstart}{X^{\star\transpose}}
\newcommand{\Ystar}{Y^{\star}}
\newcommand{\lipschitzfns}{\mathcal{F}_L}
\newcommand{\TestParamOLS}{\theta^{\star}_{\text{OLS}}}
\newcommand{\TestParamOLSp}{\theta^{\star}_{\text{OLS},p}}
\newtheorem{theorem}{Theorem}
\newtheorem{proposition}[theorem]{Proposition}
\newtheorem{corollary}[theorem]{Corollary}
\newtheorem{assumption}[theorem]{Assumption}
\newtheorem{definition}[theorem]{Definition}
\newtheorem{lemma}[theorem]{Lemma}
\theoremstyle{definition}
\theoremstyle{remark}
\crefname{assumption}{Assumption}{Assumptions}
\crefname{assumption}{assumption}{assumptions}
\Crefname{assumption}{Assumption}{Assumptions}
\title{Smooth Sailing: Lipschitz-Driven Uncertainty Quantification for Spatial Association
} 
\author{%
 David R.~Burt\thanks{Equal Contribution} \quad Renato Berlinghieri$^{\ast}$ \quad Stephen Bates \quad Tamara Broderick \\
MIT LIDS\\
\texttt{\{dburt,renb,s\_bates,tamarab\}@mit.edu}
}
\begin{document}

\maketitle

\begin{abstract}

Estimating associations between spatial covariates and responses --- rather than merely predicting responses --- is central to environmental science, epidemiology, and economics. For instance, public health officials might be interested in whether air pollution has a strictly positive association with a health outcome, and the magnitude of any effect. Standard machine learning methods often provide accurate predictions but offer limited insight into covariate-response relationships. And we show that existing methods for constructing confidence (or credible) intervals for associations can fail to provide nominal coverage in the face of model misspecification and nonrandom locations
--- despite both being essentially always present in spatial problems. We introduce a method that constructs valid frequentist confidence intervals for associations in spatial settings. Our method requires minimal assumptions beyond a form of spatial smoothness and a homoskedastic Gaussian error assumption. In particular, we do not require model correctness or covariate overlap between training and target locations. Our approach is the first to guarantee nominal coverage in this setting and outperforms existing techniques in both real and simulated experiments. Our confidence intervals are valid in finite samples when the noise of the Gaussian error is known, and we provide an asymptotically consistent estimation procedure for this noise variance when it is unknown.


\end{abstract}

\section{Introduction}
\label{sec:introduction}
Scientists and social scientists often seek to understand the direction and magnitude of associations in settings where variables vary spatially. And since these associations are often used to inform policy, communicating uncertainties is crucial. Example associations of interest include the relationship between aerosol concentrations and regional precipitation changes \citep{westervelt_connecting_2018}, the link between proximity to major highways and the prevalence of dementia \citep{li_relationships_2023}, and the link between air pollution exposure and birth weight \citep{lee_association_2022}. In each case, the covariates (e.g., aerosol concentrations, proximity to highways, and air pollution) may be viewed as functions of spatial location. Moreover, scientists often have data at some spatial locations but want to infer associations at others. For instance, a country might measure birth weight and air pollution at the municipal level for some municipalities and wish to understand their relationship in municipalities without data. 

Our goal in the present work is to provide valid and useful confidence intervals for an estimator of these associations when (i) the underlying model may be misspecified and (ii) inference is needed at spatial locations that may differ from those in the observed data. First we argue that existing methods do not already solve this problem. In particular, we make this argument in turn for modern flexible machine learning methods, linear methods,  spatial regression methods, and debiasing approaches.


\textbf{Flexible Machine Learning Methods.}
A priori, we might expect a nonlinear relationship between air pollution (as a covariate) and birth weight (as the response). A natural idea is to fit a flexible model --- such as a (deep) Gaussian process \citep{Rasmussen2006Gaussian,pmlr-v31-damianou13a}, transformer \citep{NIPS2017_3f5ee243,klemmer2025satclip}, or XGBoost \citep{chen2016xgboost}. These methods can achieve high predictive accuracy, but the methods on their own often lack interpretability and do not immediately yield conclusions about covariate-response associations \citep{rudin2019stop,DoshiVelez2017TowardsAR}.  Researchers have proposed a number of post hoc interpretability methods, such as Shapley values \citep{shapley1953value, lundberg2017unified}, LIME \citep{ribeiro2016should}, partial dependence plots \citep{friedman_greedy_2001} and Accumulated Local Effects plots \citep{apley_visualizing_2020}. These methods work directly with the fitted model and can be used to describe associations between individual covariates and model predictions. And because they are interpreting the fitted model (rather than the response itself), there is no need (or mechanism) to quantify uncertainties due to insufficient data. When the fitted model closely approximates the true response, one can view the output of these interpretability methods as describing the relationship between the covariates and response.

In many applications (e.g., in engineering, advertising, and marketing), the available data often contains enough information to be confident that the highly flexible machine learning model approximates the response everywhere of interest. Conversely, in many applications in the sciences and social sciences (such as those cited above), there is often insufficient data to confidently fit a nonparametric or high-dimensional model closely to the true (latent) response everywhere relevant. In particular, in many spatial problems, there is often not enough information for a flexible method to reconstruct the response well in many spatial locations of interest. Nonetheless, we might think there could still be sufficient data to capture associations --- and, by quantifying uncertainty, we can check our confidence in any conclusions. The discrepancy between applications that are data-rich and data-poor in this sense can help explain the phenomenon observed by \citet{rudin2019stop}: that post hoc approximations of black-box models may be worse than fitting interpretable models directly. For example, as illustrated in a real-life recidivism case, a linear model used to approximate a black-box model can seriously misrepresent key relationships --- including that between race and recidivism.

\textbf{Ordinary Least Squares.} So it seems natural to choose an appropriate interpretable model (rather than post hoc method) and quantify its uncertainty. 
\citet{buja_models_2019} demonstrated that (directly-fit, not post hoc) linear models can be used to interpretably summarize associations even in the face of potentially nonlinear relationships (i.e., under misspecification).
This observation helps explain why all of the applied studies cited above use linear regression.\footnote{In fact, \citet[Figure 1]{castro_torres_use_2024} surveyed quantitative methods in papers up to 2022 and found that over half of those in fields such as agricultural science, social sciences, and health sciences reported results from a linear model.}
Moreover, ordinary least squares (OLS) comes equipped with classical confidence intervals. However, when the linear model is misspecified, \citet{buja_models_2019} notes that the OLS estimator depends on the covariate values, and it follows that classical confidence intervals are valid only at the observed (training) values. Since we're interested in valid intervals at other locations, which will have different covariate values, we need another approach. The sandwich estimator \citep{huber_behavior_1967, white_heteroskedasticity-consistent_1980, white_1980_usingleastsquares, buja_models_2019} offers valid intervals under misspecification, but only if all covariates are drawn from a single distribution, which isn't the case in our setting where we want to draw inferences at different locations from where we observe data. 
Thus, even though OLS provides interpretable estimates, it does not solve our key challenge: constructing valid confidence intervals in the common scenario where both (i) the linear model is misspecified and (ii) inference is needed at spatial locations that may differ from those in the observed data.

\textbf{Spatial Regression Methods.
One might hope that a method specifically tailored for spatial regression could address these issues.}
Generalized least squares (GLS) regression \citep{aitken1936iv} is designed to handle spatial correlation in residuals \citep[pp.\ 22–24]{cressie2015statistics}, but it does not address the bias introduced by misspecification and nonrandom locations. Bayesian spatial models, such as those based on Gaussian processes, are also common, but their credible intervals tend to underestimate uncertainty in the presence of both misspecification and nonrandom locations \citep{walker2013bayesian, muller_2013_risk}. 

\textbf{Debiasing Approaches.}
Another natural idea is to construct a debiased estimator of the association, and then account for the variance of this debiasing procedure. Importance weighting methods from the covariate shift literature \citep{shimodaira_improving_2000} pursue this goal by reweighting each observation according to an estimated density ratio between target and source covariate distributions, aiming to remove bias from distribution shift. Semiparametric inference in partially linear models (e.g., \citep{robinson1988root, robins1992estimating, chernozhukov2018double}) takes a related approach: it first fits a flexible model to capture variation explained by a ``nuisance'' parameter (in our case, the spatial location) and then constructs a debiased estimator of the target association by regressing out the variation explained by this nuisance, so that the remaining signal reflects the direct relationship between the covariate and the response. However, these methods face two key limitations. First, they assume that the debiasing step can perfectly remove bias, which need not be true (especially in finite samples). Second, their validity relies on the assumption that observations are drawn independently and identically. And this assumption rarely holds in spatial applications, where locations are fixed and often spread unevenly due to physical, logistical, or policy constraints. We discuss these and related approaches in more detail in \cref{app:related-work}.


\textbf{Our Contribution.} In what follows, we show through real and simulated experiments (\cref{sec:experiments}) that existing approaches can yield confidence intervals with coverage far below the nominal level. Our principal contribution is to introduce the first method for constructing confidence intervals in spatial associations that guarantees frequentist coverage at the nominal level even when the underlying model is misspecified and inference is required at fixed, nonrandom locations that differ from those observed.
We are able to account for misspecification and nonrandom locations simultaneously by making more spatially appropriate assumptions than prior work. In particular, prior work relies on a (spatially inappropriate) assumption of independent and identically distribution data; we instead assume the response is a smooth function of space observed with homoskedastic Gaussian noise. When the variance of this noise is known, our confidence intervals are valid in finite samples. To address the common case where the variance is unknown, we provide an asymptotically consistent estimator for it. In our experiments, our method is the only one that consistently attains nominal coverage (or even comes close).

\section{Problem Setup}\label{sec:setup}
We start by describing the available data. After reviewing well-specified linear regression, we set up the misspecified case with different target and source data, and we establish our estimand in this case.

\textbf{Data.}
Following the covariate-shift literature \citep{bendavid_2006_analysis,pan_2010_transfer,Csurka2017}, we refer to our fully observed (training) data as the \emph{source} data; likewise, we let \emph{target} data denote the (test) locations and covariates where we do not observe the response but would like to understand the association between covariates and response. In particular, the source data consists of $N$ triplets $(S_n, X_n, Y_n)_{n=1}^N$, with spatial location $S_n \in \spatialdomain$, covariate $X_n \in \RR^P$, and response\footnote{We propose possible extensions to multivariate responses in \cref{app:multivariates}. But we leave the multivariate-response case largely as an area for future work.} $Y_n \in \RR$. Here $\spatialdomain$ represents geographic space; we assume $\spatialdomain$ is a metric space with metric $d_{\spatialdomain}$. We collect the source covariates in the matrix $X \in \RR^{N \times P}$ and the source responses in the $N$-long column vector $Y$.
The target data consists of $M$ pairs $(\Sstar_m, \Xstar_m)_{m=1}^M$, with $\Sstar_m \in \spatialdomain$, $\Xstar_m \in \RR^{P}$. The corresponding responses $\Ystar_m \in \RR, 1 \leq m \leq M$ are unobserved. We collect the target covariates in $X^{*} \in \RR^{M \times P}$ and responses in column vector $Y^{*} \in \RR^{M}$.

\textbf{Review: Well-specified Linear Model.} Though we will focus on the misspecified case, we start by reviewing the classic well-specified case for comparison purposes. 
In the classic well-specified setup, we have
$Y_n = \theta_{\text{OLS}}^{\transpose} X_n + \epsilon_n$ and
$\Ystar_m = \theta_{\text{OLS}}^{\transpose} \Xstar_m + \epsilon^{\star}_m$
with column-vector parameter $\theta_{\text{OLS}} \in \RR^P$ and $\epsilon_n, \epsilon^{\star}_m \stackrel{\text{iid}}{\sim} \mathcal{N}(0, \sigma^2)$ for some (unknown) $\sigma^2 > 0$.
For any fixed set of source data points (and assuming invertibility holds as needed), we can recover the parameter exactly as
\begin{align}\label{eqn:ols-estimand}
    \theta_{\text{OLS}} = \arg\min_{\theta \in \RR^P} \EE\Big[\frac{1}{N}\sum_{n=1}^N (Y_n - \theta^{\transpose}X_n)\Big]
    =\EE[X^{\transpose}X]^{-1}\EE[X^{\transpose}Y]
    =(X^{\transpose}X)^{-1} X^{\transpose} \EE[Y].
\end{align}
An analogous formula holds for target points; $\theta_{\text{OLS}}$ is constant across covariate values in any case.
Since the population expectation is unknown, analysts typically estimate $\theta_{\text{OLS}}$ via maximum likelihood. The standard estimator ($\hat{\theta}_{\text{OLS}, p}$) and confidence interval at level $\alpha$ ($I_{\text{OLS}, p}$) for the $p$th coefficient ($\theta_{\text{OLS}, p}$) are 
\begin{align}
    \hat{\theta}_{\text{OLS}, p}  
    = e_p^{\transpose}(X^{\transpose}X)^{-1}X^{\transpose}Y, \, I_{\text{OLS}, p} = \hat{\theta}_{\text{OLS}, p} \pm z_{\alpha}\rho, \label{eqn:ols-point-estimate-and-ci}
\end{align}
where $\rho^2 = \sigma^2 e_p^{\transpose}(X^{\transpose}X)^{-1}e_p$, $z_{\alpha}$ is the $\alpha$-quantile of a standard normal distribution, and $e_p$ denotes the $P$-dimensional vector with a $1$ in entry $p$ and $0$s elsewhere. 
Under correct specification, the confidence interval is valid: that is, it provides nominal coverage. E.g., a 95\% confidence interval contains the true parameter at least 95\% of the time under resampling. 
The noise variance $\sigma^2$ in $\rho$ used in \cref{{eqn:ols-point-estimate-and-ci}} is typically replaced with an estimate $\hat{\sigma}^2$. 

\textbf{Misspecified Spatial Setup: Data-Generating Process.} In what follows, we assume the data is generated as
    $Y_n = g(X_n, S_n) + \epsilon_n$ and $\Ystar_m = g(\Xstar_m, \Sstar_m) + \epsilon^{\star}_m,$
for a function $g$ that need not have a parametric form and with $\epsilon_n, \epsilon^{\star}_m \stackrel{\text{iid}}{\sim} \mathcal{N}(0, \sigma^2)$ for some (unknown) $\sigma^2 > 0$.

We assume that source and target covariates are fixed functions of spatial location. Recall that all of the covariates in our examples (aerosol concentrations, proximity to highways, and air pollution) can be expected to vary spatially and be measured with minimal error. We similarly expect meteorological variables such as precipitation, humidity, and temperature to be reasonably captured by this assumption.\footnote{Conversely demographic covariates may more reasonably be thought of as noisy functions of space, and further work is needed to handle the noisy case.}
\begin{assumption}\label{assum:cov-fixed-fns}
    There exists a function $\chi: \spatialdomain \to \RR^P$ such that $\Xstar_m =\chi(\Sstar_m)$ for $1 \leq m \leq M$ and $X_n = \chi(S_n)$ for $1 \leq n \leq N$.
\end{assumption}

Under \cref{assum:cov-fixed-fns}, our data-generating process simplifies.
\begin{assumption}\label{assum:test-train-dgp}
    There exists a function $f : \spatialdomain \to \RR$ such that $\forall m \in \{1,\ldots,M\}, \Ystar_m = f(\Sstar_m) + \epsilon^{\star}_m$ and $\forall n \in \{1,\ldots,N\}, Y_n = f(S_n) + \epsilon_n$, where
    $\epsilon^{\star}_m, \epsilon_n \stackrel{\textup{iid}}{\sim} \mathcal{N}(0,\sigma^2)$.
\end{assumption}
We emphasize that we are still allowing the response to be a function of both the covariates and the spatial location. But we need not state the dependence on the covariates explicitly in \cref{assum:test-train-dgp} due to \cref{assum:cov-fixed-fns}. Formally, let $g$ be the function satisfying $E[Y \mid X, S]=g(X, S)$. 
By \cref{assum:cov-fixed-fns}, $X=\chi(S)$, and so $E[Y \mid X, S]=g(\chi(S), S)$. Define $f: \mathcal{S} \rightarrow \mathbb{R}$ by $f(s)=g(\chi(S), S)$. In other words, the response can be a function of the covariates, but because the covariates are a fixed function of space, we can also write the response as a function of space alone.

\textbf{Our Estimand.}
At a high level, our goal is to capture the relationship between a covariate and the response variable at target locations using data from source locations, while taking into account that these two sets of locations may differ. From this perspective, we can define our estimand as the parameter of the best linear approximation to the response, where ``best'' is defined by minimizing squared error over the target locations.
\begin{align}
    \label{eqn:test-set-least-squares}
    \TestParamOLS &= \arg\min_{\theta \in \RR^{P}}\EE\Big[\sum_{m=1}^M(\Ystar_m - \theta^{\top}\Xstar_m)^2|\Sstar_m\Big].
\end{align}
As in \citep{white_1980_usingleastsquares,buja_models_2019}, since the data-generating process may be non-linear, $\TestParamOLS$ is no longer constant like the well-specified case; instead it is a function of the target locations.


Before solving the minimization in \cref{eqn:test-set-least-squares}, we argue that covariate shift, as commonly defined, does not solve the problem of interest in this paper. Instead of using our estimand above, one might instead treat the source and target locations as random draws from separate distributions, drop the conditioning on $\Sstar_m$ (in \cref{eqn:test-set-least-squares}), and rely on covariate-shift methods. Yet to the best of our knowledge these methods currently offer no finite-sample confidence intervals, leaving the question of valid confidence intervals unresolved. Moreover, in the spatial applications we study, locations are rarely i.i.d.: for example, the United States Environmental Protection Agency (EPA) places monitoring stations strategically, and the targets of interest may be only a few municipalities --- or even a single one --- with missing data. In such settings, it is unclear that a meaningful population-level distribution of locations exists. We therefore condition on the training and target locations in \cref{eqn:test-set-least-squares}, i.e.\ we treat the locations as deterministic. Alternatively, we can view our setting as a special case of covariate shift in which the training and target distributions are the respective fixed sets of locations, and therefore have disjoint support. However, under this view, standard covariate-shift estimators are inapplicable.


To solve the minimization in \cref{eqn:test-set-least-squares}, it will be convenient to assume invertibility, as for OLS.
\begin{assumption}\label{assum:invertibility}
$\Xstart\Xstar$ is invertible.
\end{assumption}
With \cref{assum:cov-fixed-fns,assum:invertibility}, we can solve the minimization in \cref{eqn:test-set-least-squares} to find
\begin{align}
    \TestParamOLS &= (\Xstart\Xstar)^{-1}\Xstart\EE[\Ystar|\Sstar]. \label{eqn:test-set-conditional-ols}
\end{align}
See \cref{app:derivation-target-cond-ols} for a derivation. To the best of our knowledge, the target-conditional estimand in \cref{eqn:test-set-conditional-ols} has not been previously proposed or analyzed in spatial linear regression.

In order to estimate $\EE[\Ystar \mid \Sstar]$ sufficiently well to in turn estimate $\TestParamOLS$ well, we need to make regularity assumptions. In classical and covariate-shift settings, these take the form of i.i.d.\ and bounded-density-ratio assumptions. Since we've seen that the classic assumptions are inappropriate in this spatial setting, we instead assume $f$ is shared across source and target (\cref{assum:test-train-dgp}) and not varying so quickly in space as to be difficult to learn from limited data.
\begin{assumption}\label{assum:lipschitz}
    The conditional expectation, $f$, is $L$-Lipschitz as a function from $(\spatialdomain, d_{\spatialdomain})  \to (\RR, |\cdot|)$. That is, for any $s, s' \in \spatialdomain$,
        $|f(s) - f(s')| \leq L d_{\spatialdomain}(s,s').$
\end{assumption}
As an illustration of how this assumption might be satisfied, consider $\chi$ an $L_1$-Lipschitz function of the spatial domain, and $f(s)=\beta^{\transpose} \chi(s) + h(s)$ with $h$ a fixed, $L_2$-Lipschitz function of $\spatialdomain$. Then $f$ is a $(\|\beta\|_2L_1 + L_2)$-Lipschitz function of the spatial domain. A similar assumption was recently considered in \citet{burt2024consistent} to derive a consistent estimator for prediction error in spatial settings.

Recall (from the discussion of $g$ vs.\ $f$ after \cref{assum:test-train-dgp}) that we ultimately allow our response to be a function of both the covariates and the spatial location. But, using our assumption that $f$ covers both dependencies (\cref{assum:test-train-dgp}), we impose our smoothness restrictions on the average response as a function of space alone. We think it is often easier to reason about smoothness in physical space, rather than in a (potentially high-dimensional) space of covariates.

\section{Lipschitz-Driven Inference}\label{sec:inference}
We next provide a confidence interval for $\TestParamOLSp$, the $p$th coefficient of the target-conditional least squares estimand (\cref{eqn:test-set-conditional-ols}). We support its validity with theory (in the present section) and experiments (in \cref{sec:experiments}). To that end, we start by providing an efficiently-computable point estimate. We end by discussing the role and choice of the Lipschitz constant $L$.

\textbf{Lipschitz-Driven Point Estimation.} Since the target covariates are known, the key challenge in estimating \cref{eqn:test-set-conditional-ols} is estimating the unknown quantity $\EE[\Ystar | \Sstar]$.

For our first approximation, recall that, by \cref{{assum:lipschitz}}, $f$ varies smoothly in space. Since the conditional distribution of the responses given the spatial locations is the same function for both target and source data (\cref{{assum:test-train-dgp}}), we can approximate $\EE[\Ystar | \Sstar]$ by a weighted average of $\EE[Y | S]$ values for locations $S$ near $\Sstar$. Concretely, let $\Psi \in \RR^{M \times N}$ be a (non-negative) matrix of weights. If $\Psi$ assigns weight mostly to source locations near each corresponding target location, then by the Lipschitz assumption (\cref{{assum:lipschitz}}), 
$\EE[\Ystar | \Sstar] \approx \Psi \EE[Y | S].$
%
$\EE[Y | S]$ is also unobserved, so we next approximate it by observed values of $Y$ (at each source location in $S$):
$
\Psi \EE[Y | S]  \approx \Psi Y.
$
%
Together, these two approximations yield the estimator:
$
    \hat{\theta}_p^{\Psi} = e_p^\transpose\left(\Xstart \Xstar \right)^{-1}\Xstart \Psi Y.
$ 
In our experiments, we construct $\Psi$ as follows. 
\begin{definition}[Nearest-Neighbor Weight Matrix]\label{def:1nn-psi}
    Define the 1-nearest neighbor weight matrix by
    \begin{align}
        \Psi_{mn} & = \begin{cases}
            1 & S_n = \textrm{closest source location to } \Sstar_m \\
            0 & \text{otherwise}
        \end{cases}
        . \textrm{ We break ties uniformly at random. }
    \end{align}
\end{definition}
While this simple construction works well in our present experiments, we discuss the potential benefits of other constructions in \cref{app:psi-choices}. This point estimation approach is closely related to KNN imputation \citep{10.1093/bioinformatics/17.6.520}, used for missing data. But KNN imputation does not account for repeated use of training responses or bias in estimation due to imputation, problems we address in the next section. We provide a more complete discussion of KNN imputation in \cref{app:related-work}.

\textbf{Lipschitz-Driven Confidence Intervals.} We detail how to efficiently compute our proposed confidence interval for $\TestParamOLSp$ in \cref{alg:lipschitz_ci} (\cref{app:implementation}). We prove its validity in \cref{thm:ci} below. Before stating our result, we establish relevant notation and intuition for how our method works.
First, we show the difference between our estimand and the estimator is normally distributed. Toward that goal, we start by writing
$
   \TestParamOLSp - \hat{\theta}_p^{\Psi}
    =
    \sum\nolimits_{m=1}^M w_m f(\Sstar_m) - \sum\nolimits_{n=1}^N v^{\Psi}_nY_n,
$ 
for $w:= e_p^\transpose\left(\Xstart\Xstar \right)^{-1}\Xstart \in \RR^{M}$ and $v^{\Psi} := w\Psi  \in \RR^{N}$. By \cref{assum:test-train-dgp} and the previous equation, 
\begin{align}\label{eqn:bias-variance-decomp}
   \TestParamOLSp - \hat{\theta}_p^{\Psi}
    = \underbrace{\sum_{m=1}^M w_m f(\Sstar_m) \!-\!\sum_{n=1}^N \!\!v^{\Psi}_n f(S_n)}_{\text{bias}} - \underbrace{\sum_{n=1}^N v^{\Psi}_n\epsilon_n}_{\text{randomness}}.
\end{align}
That is, \cref{{eqn:bias-variance-decomp}} expresses $\TestParamOLSp - \hat{\theta}_p^{\Psi}$ as the sum of (i) a \textit{bias} term due to differing locations between the source and target data and (ii) a mean-zero Gaussian \textit{randomness} term due to observation noise.

Since the spatial locations are fixed, the bias term is not random and can be written as $b \in \RR$. It follows that
$
    \TestParamOLSp  -  \hat{\theta}_p^{\Psi} \sim \mathcal{N}(b, \sigma^2\|v^{\Psi}\|_2^2)
$
since the variance of the randomness term is the sum of the variances of its (independent) summands.

Our strategy from here will be to (1) bound $b$, (2) establish a valid confidence interval using our bound on $b$ while assuming fixed $\sigma^2$, and (3) estimate $\sigma^2$ consistently (as $N\rightarrow\infty$).

To bound the bias $b$, we use \cref{assum:lipschitz} to write 
\begin{align}
    |b| \leq \sup_{g \in \lipschitzfns} \left|\sum_{m=1}^M w_m g(\Sstar_m) -\sum_{n=1}^N v^{\Psi}_n g(S_n)\right|, \label{eqn:worst-case-bias}
\end{align}
where $\lipschitzfns$ is the space of $L$-Lipschitz functions from $\spatialdomain \to \RR$. In \cref{app:proof-weights}, we show that it is possible to use Kantorovich--Rubinstein duality to restate the right side of \cref{eqn:worst-case-bias} as a Wasserstein-1 distance between discrete measures. This alternative formulation is useful since it can be cast as a linear program \citep[Chapter 3]{peyre_computational_2019}; see \cref{app:proof-weights}. Let $B$ denote the output of this linear program.

Given $B \ge |b|$, the following lemma (with proof in \cref{proof:shortest-ci}) allows us to construct a confidence interval for $\TestParamOLSp$ centered on $\hat{\theta}_p^{\Psi}$. We discuss the benefits of this construction over alternative approaches in \cref{app:use-of-shortest-ci}.
\begin{lemma}\label{lem:shortest-ci}
    Let $b \in [-B, B]$, $\tilde{c} > 0$, and $\alpha \in (0,1)$. Then the narrowest $1-\alpha$ confidence interval that is symmetric and valid for all $\mathcal{N}(b, \tilde{c}^2)$ is of the form $[-B-\tilde{c}\Delta, B+\tilde{c}\Delta]$
     where $\Delta$ is the solution of 
    $
        \Phi\left(\Delta\right) - \Phi\left(-2B/\tilde{c}-\Delta\right) = 1 - \alpha
    $ 
    with $\Phi$ the cumulative density function of a standard normal distribution. Also, the $\Delta$ satisfying this inequality is $\Delta \in [\Phi^{-1}(1-\alpha), \Phi^{-1}(1-\frac{\alpha}{2})]$.
\end{lemma}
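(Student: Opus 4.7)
The plan is to find the narrowest $r \geq 0$ such that the symmetric interval $[-r, r]$ covers every $Z \sim \mathcal{N}(b, \tilde{c}^2)$ with $b \in [-B, B]$ to probability at least $1 - \alpha$, and then reparametrize. Write the coverage
\[
    C(b, r) := \Phi\!\left(\tfrac{r-b}{\tilde{c}}\right) - \Phi\!\left(\tfrac{-r-b}{\tilde{c}}\right),
\]
so that uniform validity is the condition $\inf_{b \in [-B, B]} C(b, r) \geq 1 - \alpha$; the narrowest symmetric interval comes from the smallest $r$ for which equality holds.

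The first substantive step is to identify the worst-case $b$. Using $\Phi(x) + \Phi(-x) = 1$, I would rewrite $C(b, r) = \Phi((r-b)/\tilde{c}) + \Phi((r+b)/\tilde{c}) - 1$, which makes evenness in $b$ immediate. Differentiating and using that the standard normal density $\phi$ is even and strictly decreasing on $[0, \infty)$ gives $\partial C/\partial b < 0$ for $b > 0$ (since $r + b > |r - b|$ yields $\phi((r+b)/\tilde{c}) < \phi((r-b)/\tilde{c})$). Hence $C(\cdot, r)$ is maximized at $0$ and strictly decreasing in $|b|$, so $\inf_{b \in [-B, B]} C(b, r) = C(B, r)$. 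This monotonicity in $|b|$, which encodes that translating the Gaussian away from the center of a symmetric window strictly loses mass, is the one place where the Gaussian shape enters and is the only step I expect to require real care.

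The remainder is algebra. Setting $C(B, r) = 1 - \alpha$ and substituting $r = B + \tilde{c}\Delta$ produces $(r-B)/\tilde{c} = \Delta$ and $(-r - B)/\tilde{c} = -2B/\tilde{c} - \Delta$, which immediately yields the claimed equation $\Phi(\Delta) - \Phi(-2B/\tilde{c} - \Delta) = 1 - \alpha$. To obtain the range of $\Delta$, let $H(\Delta, B)$ denote the left side: $\partial H/\partial \Delta = \phi(\Delta) + \phi(2B/\tilde{c} + \Delta) > 0$, so the equation defines $\Delta(B)$ uniquely; and $\partial H/\partial B = (2/\tilde{c})\phi(2B/\tilde{c} + \Delta) > 0$, so implicit differentiation gives $\Delta'(B) < 0$. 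Finally, the two endpoints can be read off: at $B = 0$ the equation collapses to $2\Phi(\Delta) - 1 = 1 - \alpha$, giving $\Delta = \Phi^{-1}(1-\alpha/2)$; and as $B \to \infty$ the second term vanishes, giving $\Delta \to \Phi^{-1}(1 - \alpha)$. Combined with the monotonicity above, this pins $\Delta$ to $[\Phi^{-1}(1-\alpha), \Phi^{-1}(1-\alpha/2)]$.
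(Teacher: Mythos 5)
Your proposal is correct and follows essentially the same route as the paper: identify the worst-case bias $b=\pm B$ by symmetry and monotonicity of the coverage in $|b|$, set the worst-case coverage equal to $1-\alpha$, and read off the defining equation for $\Delta$. Your derivative-based argument for monotonicity in $b$ (rewriting the coverage as $\Phi((r-b)/\tilde c)+\Phi((r+b)/\tilde c)-1$ and comparing $\phi((r+b)/\tilde c)$ with $\phi((r-b)/\tilde c)$) is in fact more careful than the paper's verbal justification, and your endpoint bracketing of $\Delta$ via monotonicity in $B$ matches what the paper establishes separately by evaluating the defining function at $\Phi^{-1}(1-\alpha)$ and $\Phi^{-1}(1-\alpha/2)$.
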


The resulting confidence interval appears in \cref{alg:lipschitz_ci}. We next establish its validity. So far, we have covered only the known $\sigma^2$ case. We handle the unknown $\sigma^2$ case after the following theorem.
\begin{theorem}
    \label{thm:ci}
    Suppose $(\Sstar_m, \Xstar_m, \Ystar_m)_{m=1}^M$ and $(S_n, X_n, Y_n)_{n=1}^N$ satisfy \cref{assum:cov-fixed-fns,assum:lipschitz,assum:invertibility,assum:test-train-dgp} with known $\sigma^2$. Define the (random) interval $I^{\Psi}$ as in  \cref{alg:lipschitz_ci} using the known value of $\sigma^2$.
    Then with probability at least $1-\alpha$, $\TestParamOLSp \in I^{\Psi}$. That is, $I^{\Psi}$ has coverage (conditional on the test locations) at least $1-\alpha$. 
\end{theorem}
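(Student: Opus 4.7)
The plan is to leverage the decomposition already set up in \eqref{eqn:bias-variance-decomp} together with \cref{lem:shortest-ci}, so that most of the real work reduces to (i) checking that the conditional distribution of $\TestParamOLSp - \hat{\theta}_p^{\Psi}$ is Gaussian with mean living in a deterministic interval and known variance, and (ii) confirming that the linear program producing $B$ in fact yields an upper bound on the bias. Once both of these are in hand, \cref{lem:shortest-ci} delivers the stated coverage almost mechanically.

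More concretely, I would first observe that under \cref{assum:cov-fixed-fns,assum:invertibility,assum:test-train-dgp}, the weights $w = e_p^{\transpose}(\Xstart\Xstar)^{-1}\Xstart$ and $v^{\Psi} = w\Psi$ are fixed once the source and target locations are fixed. Together with the Gaussian noise assumption and the decomposition
\[
\TestParamOLSp - \hat{\theta}_p^{\Psi} \;=\; \underbrace{\sum_{m=1}^M w_m f(\Sstar_m) - \sum_{n=1}^N v^{\Psi}_n f(S_n)}_{=:\,b} \;-\; \sum_{n=1}^N v^{\Psi}_n \epsilon_n,
\]
this gives, conditional on the locations, $\TestParamOLSp - \hat{\theta}_p^{\Psi} \sim \mathcal{N}(b,\, \sigma^2 \|v^{\Psi}\|_2^2)$, with $b$ deterministic.

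Next, I would bound $b$ using \cref{assum:lipschitz}. Since $f \in \lipschitzfns$,
\[
|b| \;\leq\; \sup_{g \in \lipschitzfns}\left|\sum_{m=1}^M w_m g(\Sstar_m) - \sum_{n=1}^N v^{\Psi}_n g(S_n)\right|,
\]
and (as argued in \cref{app:proof-weights}) Kantorovich--Rubinstein duality recasts this supremum as a Wasserstein-1 distance between signed discrete measures, which is computed exactly by the linear program whose value is $B$. So $|b| \leq B$, placing $b$ in $[-B, B]$ deterministically.

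Finally, I would apply \cref{lem:shortest-ci} with $\tilde{c} = \sigma \|v^{\Psi}\|_2$ (known, since $\sigma^2$ is known and $v^{\Psi}$ is deterministic). The lemma asserts that for any $b \in [-B, B]$, the interval $[-B - \tilde{c}\Delta,\, B + \tilde{c}\Delta]$ contains a draw from $\mathcal{N}(b, \tilde{c}^2)$ with probability at least $1-\alpha$, where $\Delta$ solves the displayed equation. Translating back, $\TestParamOLSp - \hat{\theta}_p^{\Psi}$ lies in this interval with probability at least $1-\alpha$, i.e., $\TestParamOLSp$ lies in $\hat{\theta}_p^{\Psi} \pm (B + \sigma \|v^{\Psi}\|_2 \Delta)$ with probability at least $1-\alpha$. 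Matching this to the interval $I^{\Psi}$ output by \cref{alg:lipschitz_ci} completes the proof. The only genuinely nontrivial ingredient is the LP/Wasserstein identification used to justify that $B$ dominates $|b|$; modulo that (which the appendix already handles), everything else is bookkeeping on top of \cref{lem:shortest-ci}.
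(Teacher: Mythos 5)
Your proposal is correct and follows essentially the same route as the paper's own proof: the same bias--randomness decomposition yielding $\TestParamOLSp - \hat{\theta}_p^{\Psi} \sim \mathcal{N}(b, \sigma^2\|v^{\Psi}\|_2^2)$ with deterministic $b$, the same Lipschitz bound on $|b|$ via the Wasserstein/LP identification (deferred to the appendix in both cases), and the same final application of \cref{lem:shortest-ci}. No gaps.
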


In \cref{app:proof-ci-thm}, we prove validity of our confidence interval for a generic choice of weight matrix $\Psi$. \cref{thm:ci} is an immediate corollary of that result.

\textbf{Consistent Estimation of the Noise Variance $\sigma^2$.}
Generally, the noise variance $\sigma^2$ is unknown, so we will substitute an estimate for $\sigma^2$ in the calculation of the confidence interval $I^{\Psi}$ (Step~\ref{step:sigma-unknown} in \cref{alg:lipschitz_ci}). In \cref{cor:ci-unknown-sigma} below, we show that the resulting confidence interval has asymptotically valid coverage. To that end, we first show that the estimator in \cref{eqn:sigma-estimator} is consistent for $\sigma^2$.
\begin{proposition}\label{prop:noise-variance-consistent}
 Suppose the spatial domain $\spatialdomain = [-A, A]^D$ for some $A>0, D \in \NN$. Take \cref{assum:test-train-dgp,assum:lipschitz}. For any sequence of source spatial locations $(S_n)_{n=1}^\infty$, take $\hat{\sigma}^2_N$ as in \cref{eqn:sigma-estimator}. 
Then $\hat{\sigma}^2_N \to \sigma^2$ in probability as $N \rightarrow \infty$.
\end{proposition}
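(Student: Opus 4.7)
The plan is to decompose $\hat{\sigma}^2_N$ into a signal term (depending only on $f$ evaluated at source locations) and a noise term (depending only on the $\epsilon_n$), show that the Lipschitz assumption together with the compactness of $\spatialdomain$ drives the signal term to $0$, and show that the noise term concentrates around $\sigma^2$ by a second-moment (Chebyshev) argument. I will work with the form of $\hat{\sigma}^2_N$ that mirrors the $1$-NN construction used in \cref{def:1nn-psi}, namely the difference-based estimator $\hat{\sigma}^2_N = \tfrac{1}{2N}\sum_{n=1}^N (Y_n - Y_{\pi(n)})^2$, where $\pi(n)$ is the index of the nearest source location to $S_n$ (with ties broken arbitrarily). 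Writing $Y_n - Y_{\pi(n)} = (f(S_n)-f(S_{\pi(n)})) + (\epsilon_n - \epsilon_{\pi(n)})$ and expanding the square splits the estimator into three pieces: a deterministic signal piece, a stochastic noise piece, and a cross term.

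The first step is to control the signal piece $T_1 := \tfrac{1}{2N}\sum_n (f(S_n)-f(S_{\pi(n)}))^2$. By \cref{assum:lipschitz}, $T_1 \leq \tfrac{L^2}{2N}\sum_n d_{\spatialdomain}(S_n, S_{\pi(n)})^2$. The crux is to argue that this vanishes for \emph{any} deterministic sequence of source locations in $[-A,A]^D$. For any $\epsilon > 0$, tile $[-A,A]^D$ by cubes of side length $\epsilon$: there are $O(\epsilon^{-D})$ such cubes, and within any cube containing at least two source points, each point's nearest-neighbor distance is at most $\epsilon\sqrt{D}$. Hence the number of source points whose nearest-neighbor distance exceeds $\epsilon\sqrt{D}$ is at most the packing number $O(\epsilon^{-D})$, which does not depend on $N$. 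Using the trivial bound $d_{\spatialdomain}(\cdot,\cdot) \leq 2A\sqrt{D}$ on the remaining terms yields $T_1 \leq L^2 D \epsilon^2 + L^2 \cdot O(\epsilon^{-D}/N)$, which can be made arbitrarily small by first choosing $\epsilon$ small and then letting $N \to \infty$.

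Next, consider the noise piece $T_2 := \tfrac{1}{2N}\sum_n (\epsilon_n - \epsilon_{\pi(n)})^2$. Each summand has mean $\sigma^2$, so $\EE[T_2] = \sigma^2$. For the variance, the subtlety is that the same $\epsilon_k$ can appear in several summands (once as $\epsilon_n$ and possibly many times as $\epsilon_{\pi(n)}$). However, a standard geometric fact about Euclidean nearest-neighbor graphs is that each point is the nearest neighbor of at most a bounded (dimension-dependent) number of other points, $K_D$. This bounds the number of summands sharing a common $\epsilon_k$ by $O(K_D)$, so the sum $\mathrm{Var}(T_2)$ contains at most $O(K_D \, N)$ nonzero covariance pairs, each of order $\sigma^4$. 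Hence $\mathrm{Var}(T_2) = O(1/N)$, and Chebyshev's inequality gives $T_2 \to \sigma^2$ in probability. The cross term $T_3 := \tfrac{1}{N}\sum_n (f(S_n)-f(S_{\pi(n)}))(\epsilon_n - \epsilon_{\pi(n)})$ is then handled by Cauchy--Schwarz: $|T_3| \leq 2\sqrt{T_1 \cdot T_2}$, so once $T_1 \to 0$ and $T_2$ is stochastically bounded, $T_3 \to 0$ in probability as well. Combining $\hat{\sigma}^2_N = T_1 + T_2 + T_3$ gives the claim.

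The main obstacle is the deterministic packing argument for $T_1$: since the proposition allows \emph{any} sequence of source locations, I cannot rely on i.i.d.\ reasoning or a law of large numbers for the locations themselves. The compactness of $[-A,A]^D$ is doing the real work, and formalizing the two-step choice (fix $\epsilon$, then take $N$ large) is the step that needs the most care; the rest is a routine variance calculation for the noise and an application of Cauchy--Schwarz for the cross term.
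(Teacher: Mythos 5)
There is a genuine gap: you have proved consistency of the wrong estimator. The proposition concerns $\hat{\sigma}^2_N$ as defined in \cref{eqn:sigma-estimator}, namely $\inf_{g \in \lipschitzfns} \frac{1}{N}\sum_{n=1}^N (Y_n - g(S_n))^2$, an infimum over the infinite-dimensional class of $L$-Lipschitz functions. You instead analyze the difference-based $1$-nearest-neighbor estimator $\frac{1}{2N}\sum_n (Y_n - Y_{\pi(n)})^2$, which is a different object (it is the scalable surrogate the paper introduces separately in \cref{sec:scalable-estimation-of-noise-variance}, and is only argued informally there). For the actual estimator, the easy direction is the upper bound: plugging $g = f$ into the infimum gives $\hat{\sigma}^2_N \leq \frac{1}{N}\sum_n \epsilon_n^2 \to \sigma^2$, so your packing argument for the signal term is not where the difficulty lies. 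The hard direction is the lower bound: one must rule out that some Lipschitz function partially fits the realized noise and drives the empirical squared error strictly below $\sigma^2$. This requires a uniform law of large numbers over the Lipschitz class, and nothing in your proposal addresses it --- your $T_1$, $T_2$, $T_3$ decomposition never confronts an optimization over a function class.

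The paper's proof handles exactly this point: it writes $\hat{\sigma}^2_N - \sigma^2 = Z_N + \zeta_N$ with $Z_N = \frac{1}{N}\sum_n \epsilon_n^2 - \sigma^2$ and $\zeta_N$ an infimum of an empirical process over $\lipschitzfns$; shows $\zeta_N \leq 0$ almost surely; reduces to a sup-norm-bounded subset $\overline{\lipschitzfns}$ of the Lipschitz ball; invokes Arzel\`a--Ascoli to get compactness of that subset in $L^\infty$; covers it by a finite $\gamma$-net; and finishes with a union bound plus the law of large numbers applied to $\frac{1}{N}\sum_n \epsilon_n g(S_n)$ for each net element. That covering-plus-union-bound step is the essential content of the proposition and is missing from your argument. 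As a side remark, your tiling and bounded-in-degree arguments do constitute a reasonable route to making the informal claim of \cref{sec:scalable-estimation-of-noise-variance} rigorous for the surrogate estimator, but that is not the statement you were asked to prove.
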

See \cref{app:proof-noise-variance-consistent} for a proof. For intuition, recall that the conditional expectation minimizes expected squared error over all functions. Since the conditional expectation is $L$-Lipschitz (\cref{assum:lipschitz}),
\begin{align}
    \sigma^2  
    = \inf_{g \in \lipschitzfns} \EE\Big[\frac{1}{N}\sum_{n=1}^N(Y - g(S_n))^2 \Big\vert S\Big]. \label{eqn:sigma-sq-inf}
\end{align}
The empirical version of \cref{eqn:sigma-sq-inf} is \cref{eqn:sigma-estimator},
which we show is a good estimate for $\sigma^2$ for large $N$. 

When $\spatialdomain$ is a subset of Euclidean space or a subset of the sphere, the minimization in \cref{eqn:sigma-estimator} yields a quadratic program for computing $\hat{\sigma}^2_N$. We provide implementation details in \cref{app:computation-variance-estimator}.

Given \cref{prop:noise-variance-consistent}, it follows from Slutsky's Lemma that the resulting confidence interval is asymptotically valid. We provide a formal statement below, and a proof in \cref{app:proof-noise-variance-consistent}
\begin{corollary}\label{cor:ci-unknown-sigma}
For $\spatialdomain$ as in \cref{prop:noise-variance-consistent}, with the assumptions and notation of \cref{thm:ci}, but with $\sigma^2$ unknown, the confidence interval $I^{\Psi}$ from \cref{alg:lipschitz_ci} has asymptotic (with fixed $M$ and as $N \to \infty$) coverage at level $1-\alpha$.
\end{corollary}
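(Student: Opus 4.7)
The plan is to reduce the corollary to a Slutsky-style continuity argument driven by \cref{prop:noise-variance-consistent}. Starting from the decomposition in \cref{eqn:bias-variance-decomp}, I would write $\TestParamOLSp - \hat{\theta}_p^{\Psi} = b - \sum_{n=1}^N v^{\Psi}_n \epsilon_n$, where $b$ is deterministic with $|b|\leq B$ and the noise sum is $\mathcal{N}(0, c^2)$ with $c = \sigma \|v^{\Psi}\|_2$. Writing $\xi \sim \mathcal{N}(0,1)$, the coverage event becomes $\mathcal{E}_N := \{|b + c\xi| \leq B + \hat{c}_N \hat{\Delta}_N\}$, where $\hat{c}_N = \hat{\sigma}_N \|v^{\Psi}\|_2$ and $\hat{\Delta}_N$ solves the defining equation of \cref{lem:shortest-ci} with $\tilde{c}$ replaced by $\hat{c}_N$.

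First I would invoke \cref{prop:noise-variance-consistent} to obtain $\hat{\sigma}_N \to \sigma$ in probability, and hence (since $\|v^{\Psi}\|_2$ is deterministic given the fixed locations) $\hat{c}_N \to c$ in probability by continuous mapping. Next, I would argue that $\Delta$ depends continuously on $\tilde{c}$ by applying the implicit function theorem to $F(\tilde{c}, \Delta) := \Phi(\Delta) - \Phi(-2B/\tilde{c} - \Delta) - (1-\alpha)$, whose partial derivative $\partial_\Delta F = \phi(\Delta) + \phi(-2B/\tilde{c} - \Delta)$ is strictly positive for every $\tilde{c} > 0$. The continuous mapping theorem then gives $\hat{\Delta}_N \to \Delta$ in probability, so $H_N := B + \hat{c}_N \hat{\Delta}_N \to H := B + c\Delta$ in probability.

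Finally, since $b + c\xi$ is Gaussian and hence atomless, the standard sandwich bound
\[ \Pr(|b + c\xi| \leq H - \epsilon) - \Pr(|H_N - H| > \epsilon) \leq \Pr(\mathcal{E}_N) \leq \Pr(|b + c\xi| \leq H + \epsilon) + \Pr(|H_N - H| > \epsilon) \]
holds for any $\epsilon>0$. Letting $N \to \infty$ (so the $\Pr(|H_N-H|>\epsilon)$ terms vanish) and then $\epsilon \to 0$ (using continuity of the cdf of $|b + c\xi|$) yields $\Pr(\mathcal{E}_N) \to \Pr(|b + c\xi| \leq H) \geq 1-\alpha$, where the last inequality is exactly \cref{thm:ci} applied with the true $\sigma^2$.

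The main obstacle I see is that $\hat{\sigma}_N^2$ and $\xi$ are both functions of the source responses $(Y_n)_{n=1}^N$ and so are not independent, which rules out the most elementary form of Slutsky's lemma applied to a product. The sandwich argument above circumvents this because it only uses convergence in probability $H_N \to H$ together with atomlessness of $b+c\xi$, both of which still hold. The substantive work — convergence in probability of $\hat{\sigma}_N^2$ for a deterministic sequence of source locations in a bounded spatial domain — is already delegated to \cref{prop:noise-variance-consistent}, so the corollary itself should be a short continuity argument on top of that proposition.
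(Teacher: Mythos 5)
Your proposal is correct and reaches the conclusion by the same high-level route as the paper: delegate the substantive work to \cref{prop:noise-variance-consistent} and then finish with a Slutsky-type continuity argument. The execution differs in a way worth noting. The paper studentizes, invoking a special case of Slutsky's lemma (\cref{lem:pos-variance-convergence}) to get $(\TestParamOLSp - \hat{\theta}_p^{\Psi} - b_N)/(\hat{\sigma}_N\|v^{\Psi}\|_2) \Rightarrow \mathcal{N}(0,1)$ and then asserts that ``the quantile computation in \cref{lem:shortest-ci} using $\hat{\sigma}^2_N$ in place of $\sigma^2$'' inherits asymptotic validity. You instead work directly with the coverage event $\{|b+c\xi|\le B+\hat{c}_N\hat{\Delta}_N\}$ and close the argument with an $\epsilon$-sandwich plus atomlessness of the Gaussian. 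Your version makes explicit two points the paper's proof glosses over: (i) $\hat{\sigma}^2_N$ and the noise in the estimator are built from the same responses, so they are dependent, and the union-bound sandwich is exactly the right way to avoid needing independence; and (ii) the half-width is not linear in $\hat{\sigma}_N$ because $\Delta$ itself depends on $\tilde{c}$, so one genuinely needs the continuity of $\tilde{c}\mapsto\Delta(\tilde{c})$, which your implicit-function-theorem step supplies. The only omission relative to the paper is the degenerate case $\sigma^2=0$ (where $c=0$, $b+c\xi$ has an atom, and $-2B/\tilde{c}$ is undefined); the paper dispatches it separately by noting $\hat{\sigma}^2_N=0$ identically and coverage follows from $|b|\le B$, and you should add the same one-line remark. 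With that caveat your argument is, if anything, the more careful of the two.
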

%

\textbf{Choice of the Lipschitz Constant $L$.} The Lipschitz assumption allows us to make inferences about the target data from the source data. The Lipschitz constant subsequently enters our intervals in two principal ways: via $B$ and via $\hat{\sigma}^2_N$. Intuitively, larger values of $L$ (allowing for less smooth responses) lead to our algorithm constructing confidence intervals with larger bounds ($B$) on the bias but smaller estimated residual variance ($\hat{\sigma}^2_N$). We depict this trade-off in a concrete example in \cref{sec:experiments} (\cref{fig:combined-plot-simulation-lipschitz}).

Ultimately the choice of Lipschitz constant must be guided by domain knowledge. We give one concrete example describing our choice of the Lipschitz constant in our real-data experiment on tree cover (\cref{sec:experiments}). We give a second concrete example of how to select the Lipschitz constant in \cref{app:choice-lipschitz-example}; in this case, the response is annual average PM$_{2.5}$ over California. In our simulated experiments, we know the minimum value for which \cref{assum:lipschitz} holds; call it $L_0$. So we first choose $L=L_0$. Then we perform ablation studies in both simulated and real data showing that we essentially maintain coverage while varying $L$ over an order of magnitude around our initial choices. We show that further decreasing $L$ can decrease coverage and discuss why it is useful to err on the side of conservatism (i.e., a larger $L$). However, we expect even small values of $L$ to improve upon classical confidence intervals in terms of coverage, since classical confidence intervals do not account for bias at all; our method similarly ignores bias when $L=0$.


\section{Experiments}\label{sec:experiments}
\begin{figure*}[t]
    \centering
    \includegraphics[width=\linewidth]{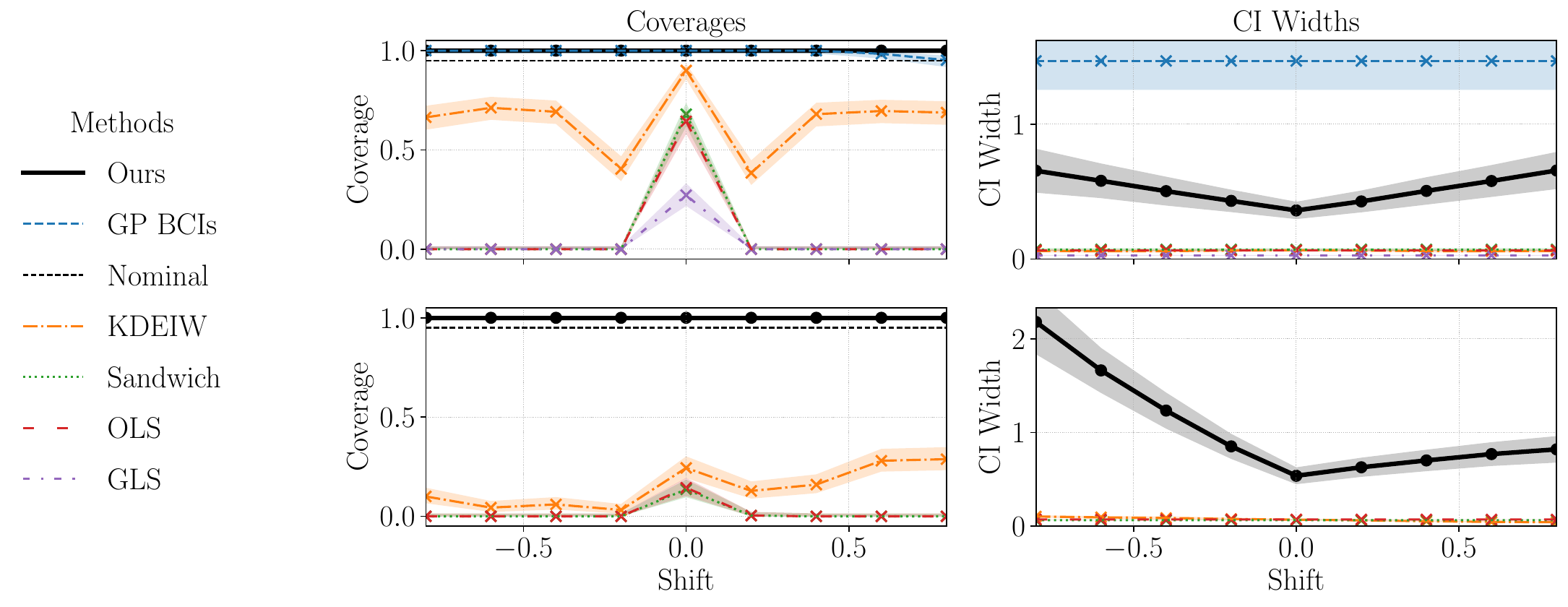}
    \caption{Coverages (left) and confidence interval widths (right) for our method as well as 5 other methods (3 methods in the lower experiment). In the upper experiment, our method and GP BCIs consistently achieve the nominal coverage (95\%); the GP BCIs line (dashed blue) overlaps with ours (solid black) for most shifts. Of the two methods with correct coverage, our method yields much narrower intervals. In the lower experiment, only our method achieves the nominal coverage. The shaded region for coverage is a (conservative) 95\% confidence interval while the shaded region for CI width is $\pm 2$ standard deviations; for more detail, see \cref{app:reported-uncertainty-simulation}.}
    \label{fig:combined-plot-simulation}
\end{figure*}
In simulated and real data experiments, we find that our
method consistently achieves nominal coverage, whereas all the alternatives dramatically fail to do so. We also provide ablation studies to evaluate the effect of varying the Lipschitz constant in both simulated and real settings.

\textbf{Baselines.} We compare to five alternative constructions.

\emph{Ordinary Least Squares (OLS).}
We treat the noise variance as unknown and estimate it as the average squared residual, with a correction for the number of degrees of freedom and a $t$-statistic instead of a $z$-statistic \citep[pp.\ 50--52]{greene2011econometric}.

\emph{Sandwich Estimator.}
The sandwich estimator \citep{huber_behavior_1967, white_heteroskedasticity-consistent_1980,white_1980_usingleastsquares} uses the same point estimate as OLS but a different variance estimate. We take the variance estimate from \citet[Equation 6]{MACKINNON1985305}: $\frac{1}{N-P}e_p^{\transpose}(X^{\transpose}X)^{-1}(X^{\transpose}RX)(X^{\transpose}X)^{-1}e_p$, where $R$ is a diagonal matrix with the squared residuals as entries. 

\emph{Importance Weighted Least Squares (KDEIW).}
As suggested by \citet[Section 9]{shimodaira_improving_2000}, we calculate importance weights via kernel density estimation (KDE). We select the bandwidth parameter with $5$-fold cross validation; see \cref{sec:implementation-of-baselines}. Given the KDE weights, we use the point estimate and confidence interval from weighted least squares.

\emph{Generalized Least Squares (GLS).}
We maximize the likelihood of 
$
    Y \sim \mathcal{N}(\theta^{\transpose} X, \Sigma),
$
with $\Sigma$ specified by an isotropic Mat\'ern 3/2 covariance function and a nugget, to select the parameters of the covariance function and nugget variance. Then we use the restricted spatial-regression framework \citep{hodges2010adding}; since we project the spatially-correlated error term onto the orthogonal complement of the covariates, the point estimate coincides with OLS. 

\emph{Gaussian Process Bayesian Credible Intervals (GP BCIs).} We use the model
$
    Y(S) = \theta^{\transpose} X(S) + h(S) + \epsilon, 
$
with $\theta^{\transpose} \sim \mathcal{N}(0, \lambda^2 I_P)$, $h \sim \mathcal{GP}(0, k_{\gamma})$, $k_{\gamma}$ an isotropic Mat\'ern 3/2 kernel function with hyperparameters $\gamma$, and $\epsilon \sim \mathcal{N}(0, \delta^2)$. We select $\{\lambda,\gamma, \delta\}$ by maximum likelihood. We report posterior credible intervals for $\theta_p$.

\textbf{Single Covariate Simulation.} In our first simulation, the source locations are uniform on $\spatialdomain = [-1,1]^2$ (blue points in \cref{fig:two-dim-shift-data} left plots). The target locations are uniform on $[\frac{-1 + \shift}{1+|\shift|}, \frac{1 + \shift}{1+|\shift|}]$, where $\mathrm{shift}$ controls the degree of distribution shift between source and target (orange points in \cref{fig:two-dim-shift-data} left plots). In this experiment, the single covariate $X=\chi(S) = S^{(1)} + S^{(2)}$ (\cref{fig:two-dim-shift-data}, third plot). And the response is $Y = X + \frac{1}{2}((S^{(1)})^2 + (S^{(2)})^2) + \epsilon$, with $\epsilon \sim \mathcal{N}(0,0.1^2)$. \cref{fig:two-dim-shift-data}, fourth plot, shows the conditional expectation of the response given location. We can compute the ground truth parameter in closed form because we have access to the conditional expectation of the response (\cref{eqn:test-set-conditional-ols}). We vary $\shift \in [0, \pm0.2,\pm0.4, \pm0.6, \pm0.8]$ and run 250 seeds for each $\shift$.

\Cref{fig:combined-plot-simulation}, top left, shows that only our method and the GP consistently achieve nominal coverage. Given correct coverage, narrower (i.e., more precise) confidence intervals are desirable; \cref{fig:combined-plot-simulation}, top right, shows that our method yields narrower intervals than the GP. KDEIW comes close to achieving nominal coverage when there is no shift. But under shift with limited data, it is not able to fully debias the estimate, and coverage drops. For large $M$ (here $M=100$), we expect the sandwich estimator to achieve nominal coverage at $\shift=0$ since it is guaranteed to cover the population under first-order misspecification without distribution shift \citep{huber_behavior_1967,white_1980_usingleastsquares}. But under any of the depicted non-zero shifts, the sandwich, OLS, and GLS achieve zero coverage.
\begin{figure}
\includegraphics[width=\textwidth]{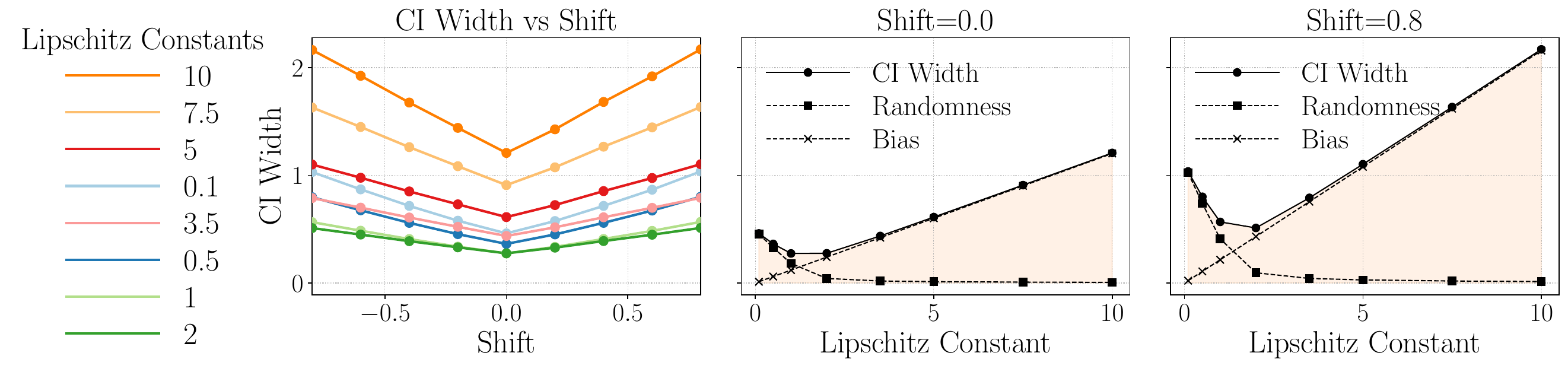}
    \caption{Left: the confidence interval width of our method as a function of shift for each Lipschitz constant $L$. All $L$ yield coverage of $1.0$. Middle and right: the confidence interval width (solid line with dot marker) as a function of the Lipschitz constant for $\shift=0$ (middle) and $\shift=0.8$ (right). The vertical axis is shared across all three plots. The bias contribution to the width (dashed line, x marker) is monotonically increasing in $L$. The randomness contribution (dashed line, square marker) is monotonically decreasing.}\label{fig:combined-plot-simulation-lipschitz}
\end{figure}
The extreme narrowness of the OLS, sandwich, KDEIW, and GLS intervals (\cref{fig:combined-plot-simulation}, top right) suggests that the problem with these methods is exactly their overconfidence. Because these approaches assume that the estimator is unbiased (or can be debiased), and that errors are independent and Gaussian, their intervals contract far too quickly, even with small amounts of data (here, $N=300$). Essentially, these methods' reliance on strong modeling assumptions leads to non-robust coverage.

We note that the confidence intervals and coverages for each method have (approximately) the same values for either $\pm \mathrm{shift}$ in this experiment (\cref{fig:combined-plot-simulation}, top right) because the covariate is symmetric around the line $S^{(1)} = S^{(2)}$; see the middle right plot of \cref{fig:two-dim-shift-data} in \cref{app:simulations-additional-details}. So positive and negative $\mathrm{shift}$ values move the source and target locations symmetrically across the $S^{(1)} = S^{(2)}$ line. We do not expect such a symmetry for general covariates and will not see it in our next simulation (\cref{fig:combined-plot-simulation}, bottom right).

\textbf{Simulation with Several Covariates.} Our second simulation generates locations as in the previous experiment. Now we use $N = 10{,}000$ and $M=100$. And we generate 3 covariates, $X^{(1)} = \sin(S^{(1)}) + \cos(S^{(2)}),$ $X^{(2)} = \cos(S^{(1)}) - \sin(S^{(2)})$ and $X^{(3)} = S^{(1)} + S_2$. The response is $Y = X^{(1)}X^{(2)} + \frac{1}{2}((S^{(1)})^2 + (S^{(2)})^2) + \epsilon$ with $\epsilon \sim \mathcal{N}(0, 0.1^2)$. We focus on inference for the first coefficient. Calculation of $\hat{\sigma}_N^2$ scales poorly with $N$ due to needing to solve a quadratic program. So here we instead estimate $\sigma^2$ using the squared error of leave-one-out $1$-nearest neighbor regression fit on the source data. See \cref{sec:scalable-estimation-of-noise-variance} for details.
We compare against the same set of methods except we do not include GLS or GP BCIs since these require further approximations to scale for this $N$.

We again find that our method achieves coverage while the other methods do not (\cref{fig:combined-plot-simulation}, bottom left). In this experiment, no other method achieves coverage over 30\% across any $\shift$ value (even 0).  As before, competing methods are overconfident, with very small CI widths (\cref{fig:combined-plot-simulation}, bottom right). 

For methods besides our own, coverage levels at $0$ are generally lower in this experiment at $\mathrm{shift}=0$ than in the previous experiment. The difference is that $N$ is much larger here (making confidence intervals narrower and exacerbating overconfidence), while $M$ is the same. The sandwich estimator covers the analogue of $\TestParamOLSp$ where spatial locations are treated as random. So it has good coverage at $\mathrm{shift}=0$ when $M \gg N$, but not when $M \ll N$.
\begin{figure}[t]
    \centering
    \includegraphics[width=\linewidth]{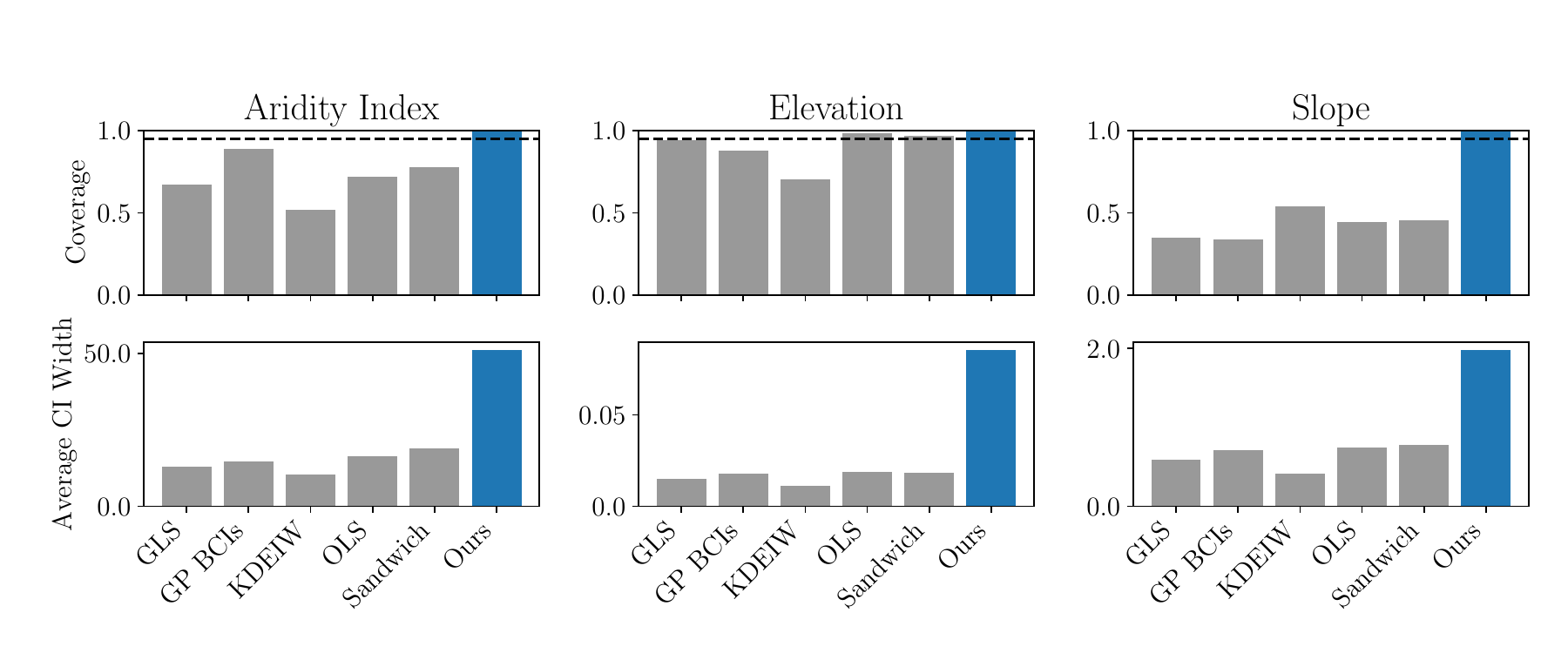}
    \caption{Coverages (upper) and confidence interval widths (lower) for our method as well as 5 other methods. Each column represents a parameter in the tree cover experiment. Only our method consistently achieves the nominal coverage.}
    \label{fig:tree-cover-coverage-main-west}
\end{figure}

\textbf{Effect of Lipschitz Constant on Confidence Intervals in Simulation Experiment.}
Above, we know and use the minimum value ($L_0$) for which \cref{assum:lipschitz} holds; that is, $L=L_0$ ($L_0=2\sqrt{2}$ and $3\sqrt{2}$, respectively). 
Now we repeat the first simulation but vary $L \in \{0.1, 0.5, 1.0, 2.0, 3.5, 5, 7.5, 10\}$. All of these $L$ values yield coverage of $1.0$ for our method, above the nominal coverage of $0.95$, even though coverage is not guaranteed by our theory for $L \leq 2 \sqrt{2} \approx 2.8$.

We next show that the confidence interval width reflects a bias-randomness trade-off as $L$ varies. If the noise were known, the confidence interval would be monotonically increasing in $L$. Since the noise is unknown, only the bias contribution to the interval width ($2B$, Step~\ref{step:bias-calc} in \cref{alg:lipschitz_ci}) increases (\cref{fig:combined-plot-simulation-lipschitz}, middle and left, $\times$). Conversely, smaller values of $L$ yield larger values for $\hat{\sigma}^2_N$, so the randomness contribution to the interval width ($2c$, Step~\ref{step:estimate-variance} in \cref{alg:lipschitz_ci}) increases (\cref{fig:combined-plot-simulation-lipschitz}, middle and left, $\square$). The full confidence interval width, $2B + 2c \Delta(\alpha)$, is not monotonic in $L$.

\textbf{Tree Cover Linear Regression.} We use a linear regression model
$
    Y_{\text{Tree Cover \%}} = \sum_{p \in \mathcal{P}} \theta_p X_p
$
to quantify how tree cover percentage in the contiguous United States (CONUS) in the year 2021 relates to three variables, $\mathcal{P} = \{\text{Aridity\_Index, Elevation, Slope}\}$.
We use the 983 data points from \citet{lu2024quantifying}, who in turn draw on \citep{usfs2023treecover,trabucco2019global,nasadem2020}. We define our target region in the West portion of CONUS as locations with latitude in the range (25, 50) and longitude in the range (-125, -110). Out of all points in this region, we designate $50\%$ --- totaling 133 sites --- as target data. Next, we select the source data by taking a uniform random sample of $20\%$ of the remaining spatial locations, repeated over 250 random seeds to assess coverage performance. Each seed yields 170 source locations. \cref{fig:tree-cover-split-west} illustrates the spatial split between source and target data for a representative seed. We discuss the data and our pre-processing in detail in \cref{app:tree-cover}.

We compare coverage for confidence intervals of the three parameters, $\theta_{\text{Aridity\_Index}}, \theta_{\text{Elevation}}, \text{ and } \theta_{\text{Slope}}$. We discuss how we evaluate coverage in \cref{app:real-data-coverage-computation}. In the top row of \cref{fig:tree-cover-coverage-main-west}, we see that our method is the only one to achieve the $95\%$ nominal coverage for all three parameters. Conversely, for the \textit{Slope} parameter, every other method achieves coverage at most $54\%$.
In the bottom row of \cref{fig:tree-cover-conf-intervals-west}, we again see that alternative methods fail to provide coverage due to their overconfidence (small widths); see also \cref{fig:tree-cover-conf-intervals-west}, which shows all methods' constructed confidence intervals across all three parameters for 7 of the 250 seeds. In \cref{app:width-cis}, we further discuss how our intervals are the narrowest intervals among those
that maintain validity. In \cref{app:tree-cover-southeast}, we conduct a similar analysis but with target locations in the Southeast, rather than West, of CONUS. The results align with our discussion here.

\textbf{Choice of Lipschitz Constant in the Tree Cover Experiment.}
For the tree cover experiment, we leverage domain knowledge to set the Lipschitz constant to $L = 0.2$, in units of percent tree cover per kilometer (km). This choice implies that a $1\%$ change in tree cover corresponds to moving $1 / 0.2 = 5$ km. To arrive at this choice, we observe that in certain regions of the U.S., such as the Midwest, tree coverage remains relatively uniform over several kilometers, so smaller Lipschitz constants (e.g., $L = 0.02$, corresponding to a $1\%$ change over 50 km) would be appropriate. However, in other regions --- such as the western U.S., where elevation changes are more pronounced (e.g., the Rockies, California, and the Pacific Northwest) --- tree cover can change sharply over short distances. To account for these variations conservatively, we choose $L = 0.2$. More generally, for real-world applications, we recommend the following strategy: (i) use domain knowledge to select a reasonable Lipschitz constant for the response variable, and (ii) inflate the Lipschitz constant to ensure a conservative estimate (which is more likely to satisfy \cref{assum:lipschitz}).

\textbf{Effect of Lipschitz Constant on Confidence Intervals in Tree Cover Experiment.}
In \cref{fig:tree-cover-multiple-lipschitz}, 
we show the coverage and width of our confidence intervals across three orders of magnitude of Lipschitz constants ($L$ from 0.001 to 1). For $L$ varying between 0.1 and 1 (a single order of magnitude variation around our chosen value of 0.2), we find that coverage is always met except in one case (\textit{Aridity Index} with $L=0.1$), where it is very close to nominal (89\% instead of 95\%). For \textit{Slope} and \textit{Elevation}, coverage is met or very nearly met for all $L$ values. For \textit{Aridity Index}, coverage is low for $L \leq 0.1$. Meanwhile, confidence intervals become noticeably wider for $L > 0.5$ while remaining relatively stable for smaller values. These results support our intuition to err on the side of larger $L$ values to be conservative and maintain coverage.

\section{Discussion}
We propose a new method for constructing confidence intervals for spatial linear models. We show via theory and experiments that our intervals accurately quantify uncertainty under misspecification and covariate shift. In experiments, our method is the only method that consistently achieves (or even comes close to) nominal coverage. We observe that, very commonly in spatial data analyses, covariates and responses may be observed at different, nonrandom locations in space. Since our method does not actually use the source covariate values in inference for $\TestParamOLS$, it can be applied in this common missing-data scenario. Though it requires additional work, we believe the ideas here will extend naturally to the widely used class of generalized linear models.





\section*{Acknowledgments}

This work was supported in part by a Social and Ethical Responsibilities of Computing (SERC) seed grant, an Office of Naval Research Early Career Grant, Generali, a Microsoft Trustworthy AI Grant, and NSF grant 2214177.

\bibliography{reference}

\newpage

\appendix

\crefname{section}{Appendix}{Appendices}
\Crefname{section}{Appendix}{Appendices}

\section{Extended Related Work}\label{app:related-work}
In this section, we discuss related work on bias in linear regression in spatial settings, local approaches to regression that often rely on data-borrowing strategies similar to our nearest neighbor approach, and covariate shift.

\textbf{Bias in Spatial Regression.} Linear models with a Gaussian process random effect --- that is, models of the form
\begin{align}
    Y(S) = \theta^{\transpose} X(S) + g(S) + \epsilon, 
\end{align}
with $g$ a Gaussian process and $\epsilon$ independent and identically distributed noise --- are a classic tool in spatial regression and remain widely used in applications \citep{weber_nnsvg_2023, gramacy2020surrogates, heaton2019case}. Whether to treat the covariates as fixed functions of spatial location or random is a topic of significant debate. In order for the model to be identifiable, it is generally necessary for the covariates to be thought of as random \citep[Proposition 1]{gilbert_consistency_2024}. However, \citet{paciorek_importance_2010} observed it may be more reasonable to think of covariates as fixed (the perspective we also take). If $X$ and $g$ are not independent (in the random case) or close to orthogonal in the fixed case, \citet{paciorek_importance_2010} illustrates that bias is introduced into the estimation of $\theta$. And that the degree of this bias depends on the relative scales over which $X$ and $g$ vary. He gives a closed form for this bias under strong linear-Gaussian assumptions that depend on parameters that would generally be unknown. \Citet{page_estimation_2017} builds on the result of \citet{paciorek_importance_2010}, and shows that bias in predictions made by the spatial model due to confounding may be small, despite biases in parameter estimation. \citet{hodges2010adding} proposed using restricted spatial regression, essentially explaining as much of the variation in the response as possible by the covariates to reduce bias in estimation due to spatially-correlated error. We take this approach in the GLS baseline used in our experiments. \Citet{nobre_effects_2021} extended earlier work about bias due to spatial confounding by considering the case when multiple observations are available at the same spatial location with independent noise. They showed that spatial regression models can still be biased with repeated observations due to confounding. 

In our work, we focus on bias in estimation of the target conditional OLS. Because we do not assume the model is well-specified, there is no `true' value of the parameter to estimate. Instead, our goal is to estimate the `best' (in a least squares sense) linear approximation to the response. We focus on the case when $X$ is a fixed function of location (\cref{assum:cov-fixed-fns}) and make weak non-parametric, smoothness assumptions (\cref{assum:lipschitz}). This contrasts with prior works that generally assume the covariates are random
and make linear-Gaussian assumptions to calculate the bias \citep{paciorek_importance_2010,page_estimation_2017}. Our work shows how to directly incorporate bias into confidence intervals for $\theta$ under our weak assumptions, but does not address issues of (non-)identifiability in the estimation problem.

\citet{gilbert_consistency_2024} showed that despite finite-sample bias in estimation of $\theta$ under spatial confounding, consistent estimation of $\theta$ is possible in the identifiable case when $X$ is random under infill asymptotics. This is essentially orthogonal to our work, as we focus on (finite-sample valid) confidence intervals in the case when $X$ is fixed.

\textbf{Local Versus Global Regression Estimates.} An alternative perspective on regression in spatial settings, referred to as \emph{geographically weighted regression} (GWR), focuses on estimating the association between the response and covariates at each location in space \citep{brunsdon1996geographically,fotheringham_geographically_2002}. For this regression problem to be well-defined, the covariates at each spatial location must be treated as random variables. Like our method, this approach uses nearest neighbors or local smoothing approaches to estimate the local coefficients at each location in space. And --- as in our work --- because of this ``data-borrowing'' from nearby locations, bias is introduced into the estimation of these local regression coefficients. The confidence intervals reported for GWR typically do not account for this bias \citep{yu2020measurement}. While \citet{yu2020measurement} provide a formula for the bias introduced by ``data-borrowing'', their formula relies on the assumption that the model is well-specified and depends on unobservable parameters. 

We focus on the estimation bias in the estimation of global parameters. Although local parameter estimates are often of interest in spatial analyses \citep[Chapter 1]{fotheringham_geographically_2002}, global estimates are also useful as summary statistics for making decisions that impact regions or multiple localities. And we characterize the bias in these estimates under weak, non-parametric assumptions and incorporate it into our confidence intervals. Extensions of our approach to GWR coefficients are a promising direction for future research. However, this extension is not simple, as it involves accounting for randomness in the covariates.

\textbf{Covariate Shift and Importance Weighting.} Differences between the source locations (with observed responses) and the target locations (at which we want to study the relationship between variables) can be seen as an instance of covariate shift \citep{pmlr-v235-rolf24a, tuia2016domain,weber_nnsvg_2023}. Covariate shift is often dealt with by importance weighting, reweighting each training example to account for differences in the density of the source and target distributions. Prior work in spatial machine learning has generally focused on addressing covariate shift 
in the context of estimating a method's prediction error. However, these approaches can, in principle, be extended to parameter estimation in OLS, mirroring 
the broader covariate shift literature \citep{shimodaira_improving_2000}.
 
While in the main text we focused on density ratio estimation using kernel density estimation \citep{shimodaira_improving_2000}, many other approaches for density ratio estimation can be used \citep{Sugiyama_Suzuki_Kanamori_2012,kanamori2012statistical,gretton2008covariate,que2013inverse,Sugiyama2008kliep}. Importance weighting de-biases the estimator, but can only be used if the distribution of the source data is supported on a region containing the test data. And the confidence intervals obtained by importance weighting with estimated weights do not account for errors in estimation of the density ratio (if this exists). These confidence intervals are therefore not guaranteed to achieve the nominal coverage rate. And they are not applicable in cases where extrapolation is required, as the estimator cannot necessarily be de-biased in these cases. 

In contrast, our approach ensures nominal coverage even when de-biasing the estimate 
via importance weighting is not possible. This advantage comes at the cost 
of an additional regularity assumption (\cref{assum:lipschitz}), as well as often empirically wider confidence intervals.

\textbf{Comparison to Semiparametric Inference for Partially Linear Model.} 
Partially linear models take the form

$$
\mathbb{E}\left[Y|X, S|=\beta^T X+g(S) \mathbb{E}[X \mid S]=\chi(S)\right.
$$
where $\beta$ is the parameter of interest, $S$ is a nuisance (or control) variable, $X$ is the covariates, and $g$ is an unknown and possibly complicated function. These models are widely studied in the semiparametric literature. Among many others, \citet{robins1992estimating, robinson1988root, chernozhukov2018double} focus on estimation of $\beta$, under the assumption that the triples ($S_n, X_n, Y_n$) are independent and identically distributed across data indices $n$.

In many spatial applications, it isn’t reasonable to think of the nuisance variable (geographic space) as sampled independently and identically, or even at regularly spaced locations. Observational data are often collected in a highly non-uniform way --- densely in some regions, sparsely or not at all in others --- due to physical constraints, accessibility, or policy decisions. This non-uniformity introduces distribution shifts when attempting to generalize inferences from one region to another. In the present work, we focus on inference with fixed spatial locations and do not impose regularity conditions on the sampling design. This setup allows us to quantify uncertainty in associations in cases where extrapolation to poorly-sampled geographic areas is required, or in cases with heavily clustered training locations and more uniform target locations.

A notable exception to the assumption of fully i.i.d.\ data in the semiparametric literature is \citet{heckman1986spline}. \citet{heckman1986spline} considered time as a nuisance variable, and assumed this nuisance variable was one-dimensional and sampled densely and in a sufficiently regular way. In contrast, we allow for multiple spatial dimensions and do not require regularity assumptions about the sampling design.

\textbf{Comparison to KNN Imputation.} 
Our estimator is closely related to KNN imputation \citep{10.1093/bioinformatics/17.6.520}. The $1$-Nearest Neighbor point estimate we consider is equivalent to $1$NN imputation for the response, but using only the spatial coordinates --- not the covariate values --- as features when performing imputation.  Directly applying KNN imputation to fill in the target response and then calculating standard confidence intervals would not lead to correct estimates of the variance because of repeated use of training data when imputing missing values --- we calculate the variances accounting for potential repeated use of these responses. Additionally, using our smoothness assumptions, we are able to quantify additional uncertainty due to potential bias introduced when imputing the  missing response values at the target locations. Because our confidence intervals account for potential bias and calculate the variance accounting for the weight assigned to each training example, our confidence intervals are guaranteed to be conservative, whereas confidence intervals calculated after data imputation need not be.

\section{Implementation Details}\label{app:implementation}
In this section, we describe the implementation details of our method. Particularly, this involves computing the upper bound on the bias, computation of $\Delta$ in \cref{lem:shortest-ci} and computation of $\hat{\sigma}^2_N$ from \cref{eqn:sigma-estimator}. To summarize our method, we state it as an algorithm.

\begin{algorithm}[t!]
\caption{Lipschitz-Driven CI for $\TestParamOLSp$}
\label{alg:lipschitz_ci}
\begin{algorithmic}[1]
  \INPUT 
    $\{(S_n, X_n, Y_n)\}_{n=1}^N, \{(S_m^\star, X_m^\star)\}_{m=1}^M$, Lipschitz constant $L$, confidence level $1 - \alpha$, $\sigma^2$ (optional) 
  \OUTPUT A $(1-\alpha)$-confidence interval $I^\Psi$ for $\TestParamOLSp$
  \STATE
  $\Psi \gets \text{1-NN}(\{S_n\}_{n=1}^N, \{S_m^\star\}_{m=1}^M)$ (\cref{def:1nn-psi})

  \STATE

  $\hat{\theta}_p^\Psi 
      \;\gets\; 
      e_p^\top\bigl(X^{\star\top}X^\star\bigr)^{-1}\,X^{\star\top}\,\Psi\,Y$
    
  \STATE

    $w \gets e_p^\top\bigl(X^{\star\top}X^\star\bigr)^{-1} X^{\star\top}, v^\Psi \gets w\,\Psi$

    \STATE

  $B 
      \gets \!
      \sup_{g \in \lipschitzfns} \!
        \bigl|\,\sum_{m=1}^M w_m\,g(S^\star_m) \!-\! \sum_{n=1}^N v^\Psi_n\,g(S_n)\Bigr|.
    $ \alglinelabel{step:bias-calc}
    Compute with linear program (\cref{app:implementation-wasserstein}).
  \STATE

  If $\sigma^2$ unknown, solve for the following estimator via quadratic program (\cref{app:computation-variance-estimator}):\alglinelabel{step:estimate-variance}
  \begin{align} \label{eqn:sigma-estimator}
      \smash{\sigma^2  := \hat{\sigma}^2_{N}
      \;\gets\;
      \inf_{g \in \lipschitzfns} \frac{1}{N}\,\sum_{n=1}^N 
            \bigl(Y_n - g(S_n)\bigr)^{2}}
  \end{align}
    $\alglinelabel{step:sigma-unknown}$
  \STATE
    $c \gets \sigma\,\|v^\Psi\|_2$

    \STATE
    
    Find $\Delta(\alpha)$ satisfying
    $
    \Phi\bigl(\Delta(\alpha)\bigr) 
      \;-\; 
      \Phi\Bigl(-2B/c-\Delta(\alpha)\Bigr) \;=\; 1 - \alpha,
    $  
    by root finding algorithm (\cref{app:root-finding}).
    \STATE 
    $
        I^\Psi 
      \;\gets\;
      \Bigl[\,
        \hat{\theta}_p^\Psi - B - c\,\Delta(\alpha)
        \;,\;
        \hat{\theta}_p^\Psi + B + c\,\Delta(\alpha)
      \Bigr].
    $\alglinelabel{step:ci}
\end{algorithmic}
\end{algorithm}

\subsection{Selecting the Matrix \texorpdfstring{$\Psi$}{Ψ}.}\label{app:psi-choices}
In the main text we selected $\Psi$ by using the nearest source location to each target location. We now provide additional discussion about alternative choices of $\Psi$ and possible trade-offs. 

A generalization of the $1$-nearest neighbor approach is to consider a $\Psi$ matrix determined by $K$-nearest neighbors.
\begin{definition}[Nearest-Neighbor Weight Matrix]\label{def:knn-psi}
    Define the $K$-nearest neighbor weight matrix by
    \begin{align}
        \Psi_{mn} & = \begin{cases}
            \frac{1}{K} & S_n \in \{ K \text{ \small closest source locations to } \Sstar_m \}\\
            0 & \text{ \small otherwise}
        \end{cases}
        .
    \end{align}
    For definiteness, we assume that, if multiple sources are the same distance from a target, ties are broken uniformly at random. 
\end{definition}

When using a K-nearest neighbor matrix, there is an inherent bias–variance trade-off in choosing K. Increasing K broadens the geographic range of source observations used, hence the bias will increase; however, it also lowers the variance by averaging over more responses. The degree to which increasing $K$ introduces additional bias depends on the density of data relative to the Lipschitz constant --- the benefit of smoother weighting schemes (e.g.\ $K>1$) generally becomes more pronounced as the density of spatial sampling increases since smoother weighting schemes naturally leverage multiple nearby points, reducing variance while controlling bias effectively. We do not investigate this trade-off in detail. In the experiments we ran, we found $1$-nearest neighbor to work well. This is consistent with results in the mean estimation literature suggesting that if source and target distributions overlap substantially, the variance term remains manageable, even when using $1$-nearest neighbor approaches \citep{portier2023scalable}.

\subsection{Implementation of the Wasserstein Bound Calculation}\label{app:implementation-wasserstein}

We show in \cref{app:proof-weights} how \cref{eqn:worst-case-bias} reduces to computing a Wasserstein-1 distance between empirical measures. To implement the Wasserstein distance calculation, we rely on the Python Optimal Transport library \citep{flamary2021pot}. For the simulated experiments, we compute the cost matrix using Euclidean distances between spatial locations. For the real-world experiment, we use the Haversine distance to account for Earth’s curvature.

\subsection{Choice of Lipschitz Constant Based on Prior Research: Example for Air Pollution}\label{app:choice-lipschitz-example}
As a concrete example of a selecting the Lipschitz constant based on domain-expertise, we  consider the problem of selecting a Lipschitz constant in an analysis where the response is annual average PM$_{2.5}$ over California. \Citet{https://doi.org/10.1029/2005JD006457} claim that ``Zones of representation for PM$_{2.5}$ varied from 5--10km for the urban Fresno and Bakersfield sites, and increased to 15--20km for the boundary and rural sites'' where ``[t]he zone of representation is defined as the radius of a circular area in which a species concentration varies by no more than $\pm20\%$ as it extends outward from the center monitoring site.'' The annual PM$_{2.5}$ concentrations in the study area do not exceed 30$\mu$g/m$^3$. The combination of a zone of representation between 5--20km, and a variation of not more than 30 $\mu$g/m$^3$ within this zone of representation suggests a range of possible Lipschitz constants: 0.25–1.2($\mu$g/m$^3$)/km. The authors also point out that topographical and meteorological phenomena contribute to this scale of variation. So we would not expect this proposed constant to be ``universal'' for problems related to PM$_{2.5}$, but we might expect that this range of Lipschitz constants is a reasonable starting point for other studies involving annual average PM$_{2.5}$ with similar weather and topography to California. We showed in our real-data analysis that a range of Lipschitz constants can still produce qualitatively similar results (Figure 9) and correct coverage. To err towards the side of a conservative analysis, we would recommend a user selects the largest Lipschitz constant in this range (i.e.\ 1.2($\mu$g/m$^3$)/km).

\subsection{Confidence-Interval Widths
}
\label{app:width-cis}
In some experiments, most notably the tree-cover analysis in \cref{fig:tree-cover-coverage-main-west}, our confidence intervals can appear wide. This increased width arises naturally from our explicit control of bias under extrapolation: when test locations differ substantially from training locations, the Wasserstein-based bias bound enlarges, yielding intervals that faithfully represent genuine uncertainty rather than methodological conservatism. By contrast, existing methods (e.g., OLS, sandwich, and importance-weighted approaches) produce much narrower intervals but fail to achieve nominal coverage, often approaching zero coverage in \cref{fig:combined-plot-simulation} and \cref{fig:combined-plot-simulation-lipschitz}. Our method therefore yields the narrowest intervals among those that do maintain validity. Producing still-narrower intervals would be trivial if one were willing to sacrifice coverage—after all: a zero-width interval achieves minimal size but no inferential meaning. Finally, we note that our intervals are typically informative in practice: in many settings they remain sufficiently tight to exclude zero, providing clear conclusions about the sign and magnitude of associations.

In practice, interval width also reflects the spatial density of available training data. When observations are closely spaced relative to the smoothness scale implied by the Lipschitz constant, the bias bound remains small and the resulting intervals are narrow. As the spacing grows, the bound naturally increases, producing wider intervals that correctly reflect the added uncertainty from extrapolation. Thus, even when data are sparse, the width of our intervals provides a meaningful indication of the reliability of inferences under the assumed smoothness level.

\subsection{Use of Confidence Interval Lemma~\ref{lem:shortest-ci}}\label{app:use-of-shortest-ci}

In all experiments, we construct confidence intervals as in \cref{lem:shortest-ci}.
A simpler alternative, which guarantees $1 - \alpha$ coverage for all Gaussian distributions 
$\mathcal{N}(b, c^2)$ with $b \in [-B,B]$, is to form two-sided confidence intervals for each 
$b \in [-B,B]$ and then take their union. This produces an interval of the form
\begin{align*}
   [-B \;-\; c \,\Phi^{-1}\!\bigl(1 - \tfrac{\alpha}{2}\bigr)
  \;\;,\;\;
   B \;+\; c \,\Phi^{-1}\!\bigl(1 - \tfrac{\alpha}{2}\bigr)]
\end{align*}
  
However, the confidence intervals from \cref{lem:shortest-ci} are never longer than this
union-based approach, and the root-finding step required to compute them adds negligible overhead.
Consequently, we opt for the intervals in \cref{lem:shortest-ci} in all our experiments.

\subsection{Computation of \texorpdfstring{$\Delta$}{Δ} for Lemma~\ref{lem:shortest-ci}}
\label{app:root-finding}
Define $g(\Delta) = \Phi(\Delta) - \Phi(-\frac{2B}{c}-\Delta) - 1 + \alpha$. Our goal is to find a root of $g$ in the interval $[\Phi^{-1}(1-\alpha), \Phi^{-1}(1-\alpha/2)]$. We first show that $g$ is monotonic, that a root exists in this interval, and that the root is unique.

Differentiating, we see $g'(\Delta) = \phi(\Delta) +\phi(-\frac{2B}{c}-\Delta)$, where $\phi$ denotes the Gaussian probability density function. This is strictly positive, and so $g$ is strictly monotone increasing. 

Also,
\begin{align}
    g(\Phi^{-1}(1-\alpha)) = 1- \alpha - \Phi\left(-\frac{2B}{c}-\Phi^{-1}(1-\alpha)\right) - 1 + \alpha < 1-\alpha -1 + \alpha =0.
\end{align}
by non-negativity of the CDF. And, 
\begin{align}
    g(\Phi^{-1}(1-\alpha/2)) &= 1- \alpha/2 - \Phi\left(-\frac{2B}{c}-\Phi^{-1}(1-\alpha/2)\right) - 1 + \alpha \\
    &= \alpha/2 - \left(1-  \Phi\left(\frac{2B}{c}+\Phi^{-1}(1-\alpha/2\right)\right). 
\end{align}
We used symmetry of the Gaussian in the second equality. Then,
\begin{align}
\Phi\left(\frac{2B}{c}+\Phi^{-1}(1-\alpha/2)\right) \geq 1-\alpha/2,
\end{align}
and so $g(\Phi^{-1}(1-\alpha/2)) \leq 0$. We conclude that $g$ has a root in the interval. By strict monotonicity of $g$, this root is unique. 

We use Brent's method \citep{brent_algorithm_1971} as implemented in Scipy \citep{2020SciPy-NMeth} to compute this root numerically.

\subsection{Computation of \texorpdfstring{$\hat{\sigma}^2_N$}{hat σ²\_N} via quadratic programming}\label{app:computation-variance-estimator}
To estimate the noise parameter, we need to solve the minimization problem
\begin{align}
      \hat{\sigma}^2_N = \inf_{g \in \lipschitzfns} \frac{1}{N}\sum_{n=1}^N (Y_n - g(S_n))^2.
\end{align}

The first obstacle is that the infimum is taken over an infinite-dimensional space. However, 
by the Kirszbraun theorem \citep{Kirszbraun1934berDZ,Valentine1945}, every $L$-Lipschitz function 
defined on a subset of $\mathbb{R}^D$ or $\mathbb{S}^D$ can be extended to an $L$-Lipschitz function 
on the whole domain. Since our objective function depends only on the values of $g$ at the source 
spatial locations, we only need to enforce the Lipschitz condition between all pairs of source 
spatial locations, not over the entire domain. Enforcing the Lipschitz condition at these $N$ 
source locations amounts to $N(N-1)/2$ linear inequality constraints. 

Particularly, if we define $G = (g(S_1), g(S_2), \dots g(S_n))$, then we have the constraints
\begin{align}
    AG \leq L \mathrm{vec}(\Gamma)
\end{align}
where $L$ is the Lipschitz constant, 
$\Gamma \in \mathbb{R}^{N^2 - N}$ is the matrix of pairwise distances between distinct points in $S$, 
and $A \in \mathbb{R}^{(N^2 - N)\times N}$ is a sparse matrix with exactly one $1$ and one $-1$ in 
distinct rows of each column, representing all such pairs in its rows.

The objective is a symmetric, positive-definite quadratic form since it is a sum of squares. Therefore, the optimization problem is a quadratic program.

In practice, we use the Scipy sparse matrix algebra \citep{2020SciPy-NMeth} and the CLARABEL solver \citep{Clarabel_2024} through the CVXPY optimization interface \citep{diamond2016cvxpy,agrawal2018rewriting} to solve this quadratic program. 

\subsection{Scalable Estimation of \texorpdfstring{$\sigma^2$}{σ²}}\label{sec:scalable-estimation-of-noise-variance}
The quadratic programming approach for estimating $\sigma^2$ outlined in the main text (\cref{eqn:sigma-estimator}) and described in detail in the previous section does not scale well to large numbers of source locations. Therefore, for our synthetic experiment with $N=10{,}000$, we take a different approach. 

For $1 \leq n \leq N$, let $\eta(n) = \arg\min_{n' \neq n} d_{\spatialdomain}(S_n,S_n')$. That is $\eta(n)$ is the index of the nearest point to $S_n$. Define the estimator,
\begin{align}
    \tilde{\sigma}^2_N = \frac{1}{2N}\sum_{n=1}(Y_n - Y_{\eta(n)})^2.
\end{align}
Then as long as $d_{\spatialdomain}(S_n,S_n') \approx 0$, by the Lipschitz assumption, $Y_n - Y_{\eta(n)} \approx \epsilon_n - \epsilon_{\eta(n)}$, and so 
\begin{align}
    \tilde{\sigma}^2_N \approx \frac{1}{2N}\sum_{n=1}^N(\epsilon_n - \epsilon_{\eta(n)})^2 & = \frac{1}{2N}\left(\sum_{n=1}^N\epsilon_n - \sum_{n=1}^N\epsilon_n\epsilon_{\eta(n)} + \sum_{n=1}^N\epsilon_{\eta(n)}^2\right).
\end{align}
Then,
\begin{align}
    \EE[\tilde{\sigma}^2_N ] = \sigma^2.
\end{align}

In general, we expect the estimate to concentrate around its expectation, provided that no single point in the source data is the nearest neighbor of too many other points in the source data.

\subsection{Implementation of Baseline Methods}\label{sec:implementation-of-baselines}
We now describe the details of the implementation of the baseline methods.

\textbf{Ordinary Least Squares.}
We use the ordinary least squares implementation from \texttt{statsmodel} \citep{seabold2010statsmodels}. We use the default implementation, which calculates the variance as the average sum of squared residuals with a degrees-of-freedom correction, as in \cref{eqn:ols-point-estimate-and-ci}. We use a $t$-statistic to compute the corresponding confidence interval, which is again the default in \texttt{statsmodel} \citep{seabold2010statsmodels}.

\textbf{Sandwich Estimator.}
We use the sandwich estimation procedure included in ordinary least squares in \texttt{statsmodels}  \citep{seabold2010statsmodels}. We use the \texttt{HC1} function, which implements the sandwich estimator with the degrees-of-freedom correction from \citet{Hinkley_1977_Jacknifing,MACKINNON1985305}. That is, the variance is estimated as $\frac{1}{N-P}e_p^{\transpose}(X^{\transpose}X)^{-1}(X^{\transpose}RX)(X^{\transpose}X)^{-1}e_p$. We use the default settings in \texttt{statsmodels}, which use a $z$-statistic with the sandwich estimator to compute the corresponding confidence interval.

\textbf{Importance Weighted Least Squares.}
We use the \texttt{scikit-learn} \citep{scikit-learn} implementation of kernel density estimation to estimate the density of test and train point separately. We use a Gaussian kernel (the default) and perform 5-fold cross validation to select the bandwidth parameter, maximizing the average log likelihood of held-out points in each fold. For the simulation experiments, we performed cross-validation to select the bandwidth parameters over the set $\{0.01, 0.025, 0.05, 0.1, 0.25, 0.5\}$. We selected this set of possible bandwidths to span several orders of magnitude from very short bandwidths, to bandwidths on the same order as the entire domain. We select the bandwidths for the source and target density estimation problems separately. For the real data experiments, the bandwidths was selected from the set $\{0.002, 0.005, 0.01, 0.02, 0.05, 0.1, 0.2, 0.5, 1\}$. This range was selected since the maximum Haversine distance between points in the spatial domain of interest is approximately 1. Once density estimates are obtained, we evaluate the ratio of the density function on the training locations, and use these as weights to perform weighted least squares. Weighted least squares is performed using the default settings in \texttt{statsmodels} \citep{seabold2010statsmodels}.

\textbf{Generalized Least Squares.}
We perform generalized least squares in a two-stage manner. We first approximate the covariance structure with a Gaussian process regression model. Then we use this approximation to fit a generalized least squares model with restricted spatial regression \citep{hodges2010adding}.

More precisely, first we optimize the maximum likelihood of a Gaussian process regression model with a linear mean function depending on the covariates including an intercept and Mat\'ern 3/2 kernel defined on the spatial source locations. We use the \texttt{GPFlow} \citep{GPflow2017} implementation of the likelihood, and L-BFGS \citep{liu_1989_lbfgs} for numerical optimization of the likelihood. The optimization is initialized using the \texttt{GPFlow} default parameters for the mean and covariance functions.

Once the maximum likelihood parameters have been found, we use the found prior covariance function for defining the covariance between datapoints to be used in the generalized least squares routine. We use restricted spatial regression \citep{hodges2010adding}, and so the covariance matrix is defined as $PKP + \lambda^2 I_N$, where $\lambda^2$ is the noise variance selected by maximum likelihood in the GP model, $K$ is the $N \times N$ matrix formed by evaluating the Mat\'ern 3/2 kernel with the maximum likelihood kernel parameters on the source locations, and $P$ is the orthogonal projection onto the orthogonal complement of the covariates (including intercept), i.e.
$
P = I_N - X(X^{\transpose}X)^{-1}X^{\transpose}.
$
This covariance matrix $PKP + \lambda^2 I_N$ is passed to the \texttt{GLS} method in \texttt{statsmodel}, and confidence intervals as well as point estimates are computed using the default settings. 

\textbf{Gaussian Process Bayesian Credible Intervals.}

We first optimize the maximum likelihood of a Gaussian process regression model with Mat\'ern 3/2 kernel defined on the spatial source locations summed with a linear kernel defined on the covariates. This has the same likelihood as having a linear mean function in the covariates with a Gaussian prior over the weights, see \citet[Page 28]{Rasmussen2006Gaussian}. We use the \texttt{GPFlow} \citep{GPflow2017} implementation of the likelihood, and L-BFGS \citep{liu_1989_lbfgs} for numerical optimization of the likelihood. The optimization is initialized using the \texttt{GPFlow} default parameters for the mean and covariance functions. Once we have calculated the maximum likelihood parameters, we compute the posterior credible interval for $\theta$. The posterior over $\theta$ is Gaussian and has the closed form,
\begin{align}
    \theta_{\mathrm{post}} \sim \mathcal{N}(\Sigma_{\mathrm{post}}^{-1}X^{\transpose}\Sigma^{-1}Y, \Sigma_{\mathrm{post}})
\end{align}
where 
\begin{align}
    \Sigma_{\mathrm{post}} = \left(X^{\transpose}\Sigma^{-1} X + \frac{1}{\lambda^2}I_P\right),
\end{align}
where $\lambda^2$ is the prior variance of $\theta$, $\Sigma = K + \delta^2 I_N$, where $\delta^2$ is the variance of the noise, and $K$ is the $N \times N$ kernel matrix formed by evaluating the Mat\'ern 3/2 kernel on the source spatial locations. The posterior for $e_p^{\transpose}\theta_{\mathrm{post}}$ is therefore also Gaussian, 
\begin{align}
    e_p^{\transpose}\theta_{\mathrm{post}} \sim \mathcal{N}(e_p^{\transpose}\Sigma_{\mathrm{post}}^{-1}X^{\transpose}\Sigma^{-1}Y, e_p^{\transpose}\Sigma_{\mathrm{post}}e_p). 
\end{align}
Credible intervals are then computed using $z$-scores together with this mean and variance. 


\section{Extension To Multivariate Responses}\label{app:multivariates}
We focus on a scalar response $Y_n \in \RR$. We expect  similar machinery can be adapted to confidence intervals for each coordinate of $Y_n \in \RR^{D}$. Namely, we can treat each component $Y^{(d)}_n \in \RR^{D}$ as a separate univariate problem, and apply our Lipschitz‐based bias bound and variance calibration to each coordinate. If one desires simultaneous coverage over all $D$ outputs, a straightforward Bonferroni correction (i.e.\ defining $\alpha' = \alpha/D$ and applying our method to construct $1 -\alpha'$ confidence intervals for each coordinate) or another family‐wise error control can be used. An exploration of improvements of this Bonferroni correction approach for multivariate responses would be an interesting direction for future work.

\section{Simulations Additional Details}\label{app:simulations-additional-details}
In this section, we present figures to visualize the data generating process for simulation experiments. In all experiments, an intercept is included in the regression as well as the covariates described. All simulation experiments were run on a \textit{Intel(R) Xeon(R) W-2295 CPU @ 3.00GHz} using 36 threads. The total time to run all simulation experiments was under two hours. The single covariate experiment took 9-10 minutes to run; the three covariate experiment took around 80 minutes to run, and the Lipschitz ablation study took around 70 minutes to run.

\subsection{Reported Uncertainty in Simulation Experiments}\label{app:reported-uncertainty-simulation}

In \cref{fig:combined-plot-simulation}, we provide error bars for the empirical coverage as well as a point estimate. The upper side of the confidence interval indicates the largest value of the (true) coverage such that with probability $97.5\%$, the empirical coverage would be less than or equal to the observed value. Conversely, the lower edge of the confidence interval indicates the smallest value of the (true) coverage such that with probability $97.5\%$, the empirical coverage would be greater than or equal to the observed value. This is therefore a (conservative) $95\%$ confidence interval for the true coverage. 

To calculate the upper and lower bounds, we observe that the empirical coverage is a binomial random variable, with parameters equal to the number of seeds and the true coverage. We then numerically invert the binomial cumulative mass function to calculate the upper and lower bounds by performing bisection search on the probability parameter.

We also provide $\pm 2$ standard deviation error bars on the confidence interval width. As the confidence interval width is not necessarily normal this may not be a $95\%$ confidence interval for the confidence interval width. But is meant as an indicator of spread of confidence interval widths each method obtains. 

\subsection{Data Generation for the Single Covariate Experiment}

\begin{figure*}
    \centering
    \includegraphics[width=\linewidth]{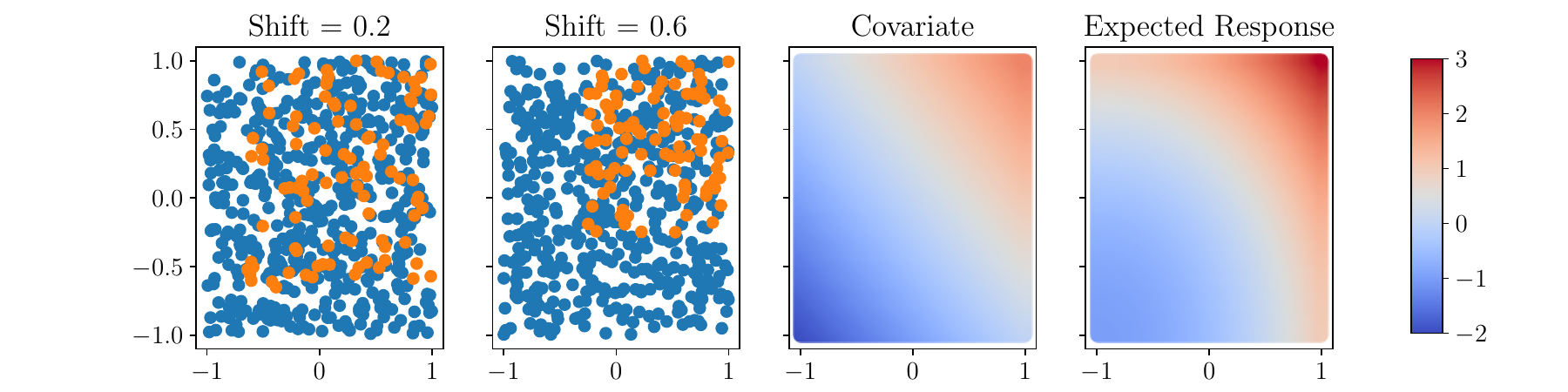}
    \caption{Spatial sites for the source (blue) and target (orange) data are shown in the left most plots for different values of shift used in generating the data. More extreme values of the shift parameter lead to larger biases in parameter estimation from the training data without adjustment. The third plot from the left shows the covariate surface, while the fourth shows the expected response at each spatial location.}
    \label{fig:two-dim-shift-data}
\end{figure*}

We show example datasets used in the simulation experiment with a single covariate and the simulation experiment in which we investigated the impact of Lipschitz constant on confidence interval width and coverage in \cref{fig:two-dim-shift-data}. The left most two panels show the distribution of source (blue) and target (orange) locations for two values of the shift parameter. Large positive values of the shift parameter lead to target distributions clustered to the top right, values close to zero lead to the target locations being approximately uniformly distributed and large negative values lead to target locations clustered to the bottom left. The third panel from the left in \cref{fig:two-dim-shift-data} shows the covariate plotted as a function of spatial location,
\begin{align}
    \chi(S^{(1)},S^{(2)}) = S^{(1)}+S^{(2)}.
\end{align}
The right most panel shows the conditional expectation of the response plotted as a function of spatial location,
\begin{align}
    \EE[\Ystar|\Sstar = (S^{(1)},S^{(2)})] = S^{(1)}+ S^{(2)} + \frac{1}{2}((S^{(1)})^2+(S^{(2)})^2).
\end{align}

The gradient of this conditional expectation is,
\begin{align}
    \begin{bmatrix}
    1 + S^{(1)} \\
    1+ S^{(2)}
    \end{bmatrix}
\end{align}
which has norm
\begin{align}
    \sqrt{(1 + S^{(1)})^2 + (1+ S^{(2)})^2}. 
\end{align}
This obtains a maximum of $2\sqrt{2}$ on $[-1,1]^2$, and so this is the Lipschitz constant of $f(S) = \EE[Y|S]$.

\subsection{Data Generation for the Three Covariate Experiment}
In \cref{fig:two-dim-shift-trig-data} we show data generated for the three covariate shift experiment. Source and target locations are generated as in the one covariate simulation, but with $N=10{,}000$ ($M=100$ is still used). These are not shown in \cref{fig:two-dim-shift-trig-data}. 

The covariates are, from left to right in \cref{fig:two-dim-shift-trig-data}
\begin{align}
    X^{(1)} &= \sin(S^{(1)}) + \cos(S^{(2)}) \\
    X^{(2)} &= \cos(S^{(1)}) - \sin(S^{(2)}) \\
    X^{(3)} &= S^{(1)} + S^{(2)}.
\end{align}
The conditional expectation of the response (right most panel in \cref{fig:two-dim-shift-trig-data}) is 
\begin{align} 
    \EE[Y|S=(S^{(1)},S^{(2)}] &= X^{(1)}X^{(2)} + \frac{1}{2}((S^{(1)})^2 + (S^{(2)})^2) \\
    &= (\sin(S^{(1)}) + \cos(S^{(2)}))(\cos(S^{(1)}) - \sin(S^{(2)})) + \frac{1}{2}((S^{(1)})^2 + (S^{(2)})^2).
\end{align}
The gradient of this conditional expectation is 
\begin{align}
    \begin{bmatrix}
    \cos(2S^{(1)})-\sin(S^{(1)} + S^{(2)}) + S^{(1)} \\
    - \cos(2S^{(1)}) + \sin(S^{(1)} - S^{(2)}) + S^{(2)}
    \end{bmatrix}.
\end{align}
We see that both arguments of this gradient are less than $3$ in absolute value, and therefore the norm of this Lipschitz constant is less than or equal to $\sqrt{3^2 + 3^2} = 3\sqrt{2}$.
\begin{figure*}
    \centering
    \includegraphics[width=\linewidth]{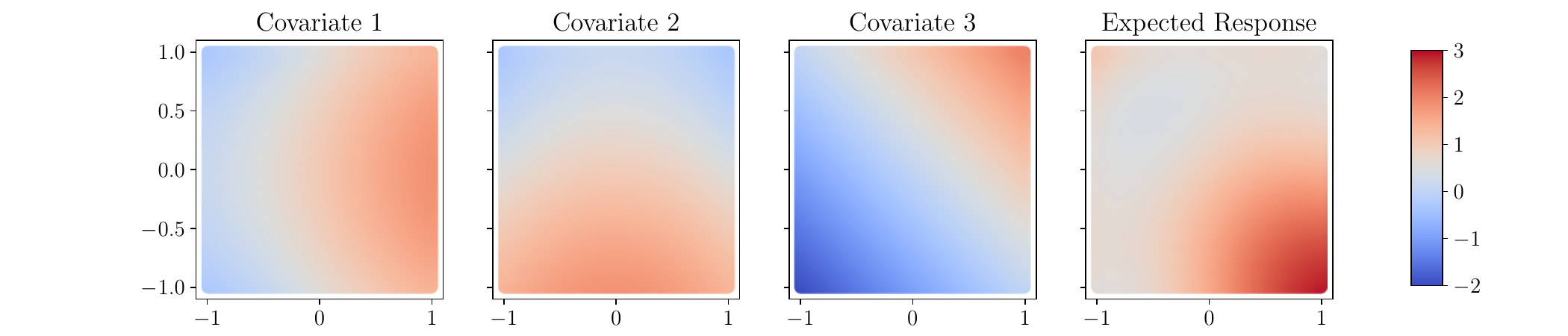}
    \caption{The first 3 plots from the left show the covariate surfaces, while the fourth shows the expected response at each spatial location for the second simulated experiment. The source and target locations (not shown) are the same as in \cref{fig:two-dim-shift-data}, though with $N=10{,}000$.}
    \label{fig:two-dim-shift-trig-data}
\end{figure*}

\section{Tree Cover Experiment Additional Details}\label{app:tree-cover} 
In this section we provide additional details for the real data tree cover experiment, as well as figures to visualize the data and additional experimental results. The tree cover experiments was run on a \textit{Intel(R) Xeon(R) W-2295 CPU @ 3.00GHz} using 36 threads. The total time to run the experiment was under 5 minutes. The Lipschitz ablation study also took less than 5 minutes to run. 
 
\subsection{Tree Cover Data}\label{app:tree-cover-data}

Our analysis draws on data from \citet{lu2024quantifying}, who manually labeled 983 high-resolution Google Maps satellite images for tree cover percentage. While no license is specified, we used this data with permission of the authors. These images were selected from random locations within the 2021 USFS Tree Canopy Cover (TCC) product \citep{usfs2023treecover}. This dataset is public domain. We follow \citet{lu2024quantifying} and use three covariates: 
\begin{enumerate}
    \item \textit{Global Aridity Index (1970--2000)}: Averaged at a 30 arc-seconds resolution \citep{trabucco2019global}. This index is calculated as the ratio of precipitation to evapotranspiration, with lower values indicating more arid conditions.
    \item \textit{Elevation}: Provided by NASA’s 30-meter resolution dataset \citep{nasadem2020}.
    \item \textit{Slope}: NASA’s 30 m Digital Elevation Model. Also provided by NASA’s 30-meter resolution dataset \citep{nasadem2020}. While we could not find specific license
    information, as the \textit{Slope} and \textit{Elevation} datasets are produced by a US government agency (NASA), we understand this data to be public domain following section 105 of the Copyright Act of 1976.
\end{enumerate}

\Cref{fig:tree-cover-data} provides a visual overview of both the tree cover and covariates.  
As a preprocessing step, we convert the (latitude, longitude) coordinates of each data point into radians. This allows us to use the Haversine formula to compute distances in kilometers for the Wasserstein-1 cost and the nearest neighbor weighting procedure.

Elevation and slope are important factors influencing tree cover worldwide. Generally, areas at lower elevations tend to have more tree cover, often due to higher temperatures \citep{mayor2017elevation}, and sloped terrains also support greater tree coverage \citep{sandel2013human}. As done in \citet{lu2024quantifying}, we focus on these three covariates and did not include additional factors that might affect tree cover. This decision was made because our primary objective is to demonstrate uncertainty quantification rather than to provide a comprehensive explanation of tree cover dynamics.

\Cref{fig:tree-cover-data} provides a visual overview of the distribution for both the tree cover and covariates. 

\begin{figure*}[!ht]
    \centering
\includegraphics[width=\linewidth]{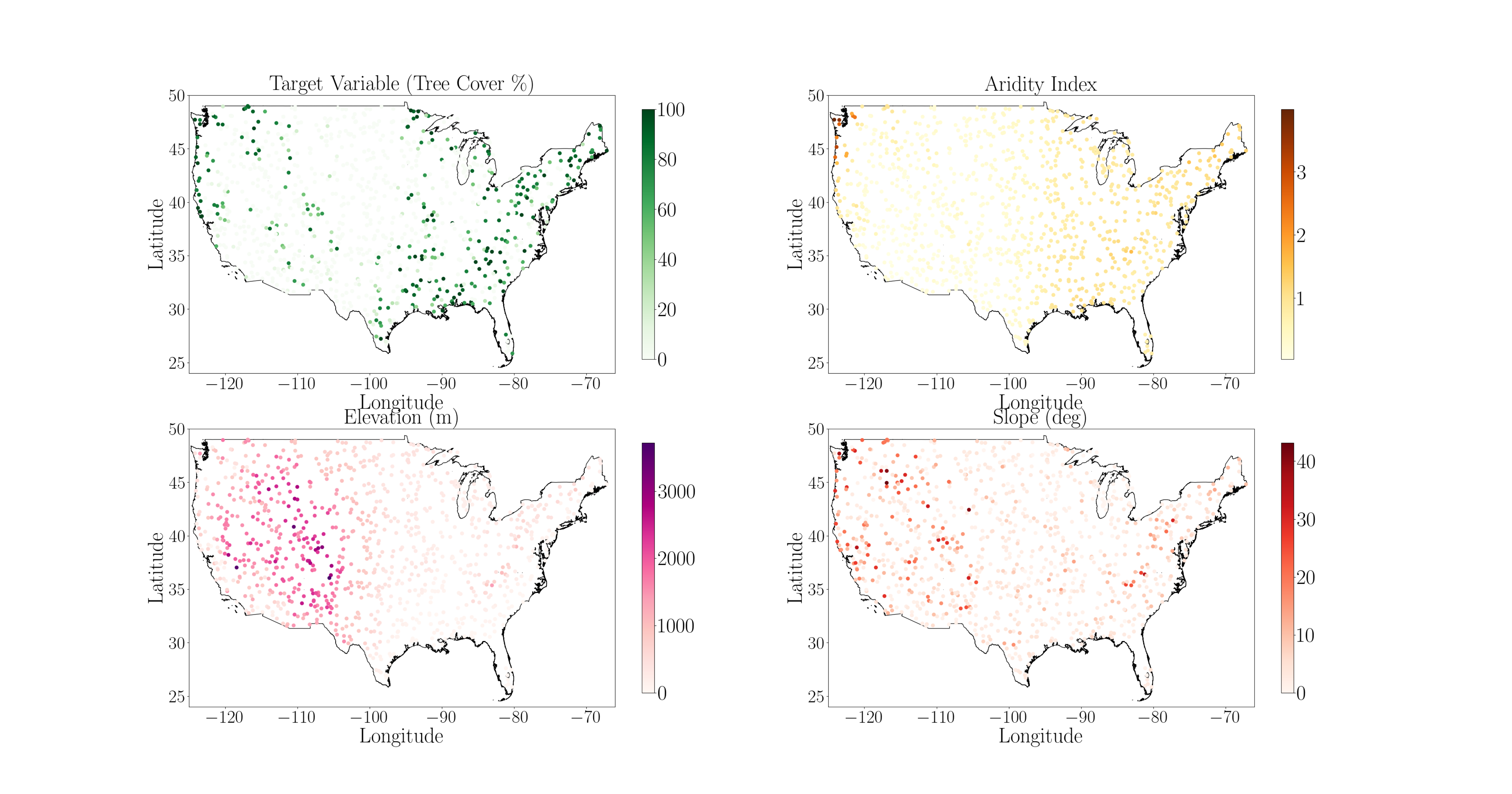}
    \caption{Tree cover response and covariates. The dots represent the 983 locations considered. Top left: distribution of tree cover percentage. Top right: Average Aridity Index, measured as the ratio of precipitation to evapotranspiration. Bottom left: Elevation, measured in meters. Bottom right: Slope, measured in degrees.}
    \label{fig:tree-cover-data}
\end{figure*}

\subsection{Source and Target Data Split and Spatial Preprocessing}
We define our target region to be the Western portion of the Continental United States. In particular, we consider locations that have latitudes between $25^\circ$ and $50^\circ$ and longitudes between $-125^\circ$ and $-110^\circ$. Within spatial locations in this defined region, we pick $50\%$ of all spatial locations --- totaling 133 sites --- as \textit{target} data.

To select the \textit{source} data, we perform the following steps:
\begin{enumerate}
    \item Consider all the remaining spatial locations, i.e. exclude the 133 target points from the pool of 983 spatial points. This leaves us with 850 points.
    \item Since we are interested in evaluating whether our method and the baselines achieve nominal coverage, uniformly randomly sample $20\%$ of the remaining locations across 250 different random seeds. By doing this, for each random seed we have 170 source locations.
\end{enumerate}

\cref{fig:tree-cover-split-west} visually represents the spatial distribution of the source and target locations for one representative random seed.

As a preprocessing step, we also convert the geographical coordinates (latitude and longitude) of each data point from degrees to radians. This conversion is essential because it allows us to apply the Haversine formula, which calculates the great-circle distance between two points on the Earth's surface in kilometers, to compute distances in kilometers for the Wasserstein-1 cost and the nearest neighbor weighting procedure.

\begin{figure*}
    \centering    \includegraphics[width=0.97\linewidth]{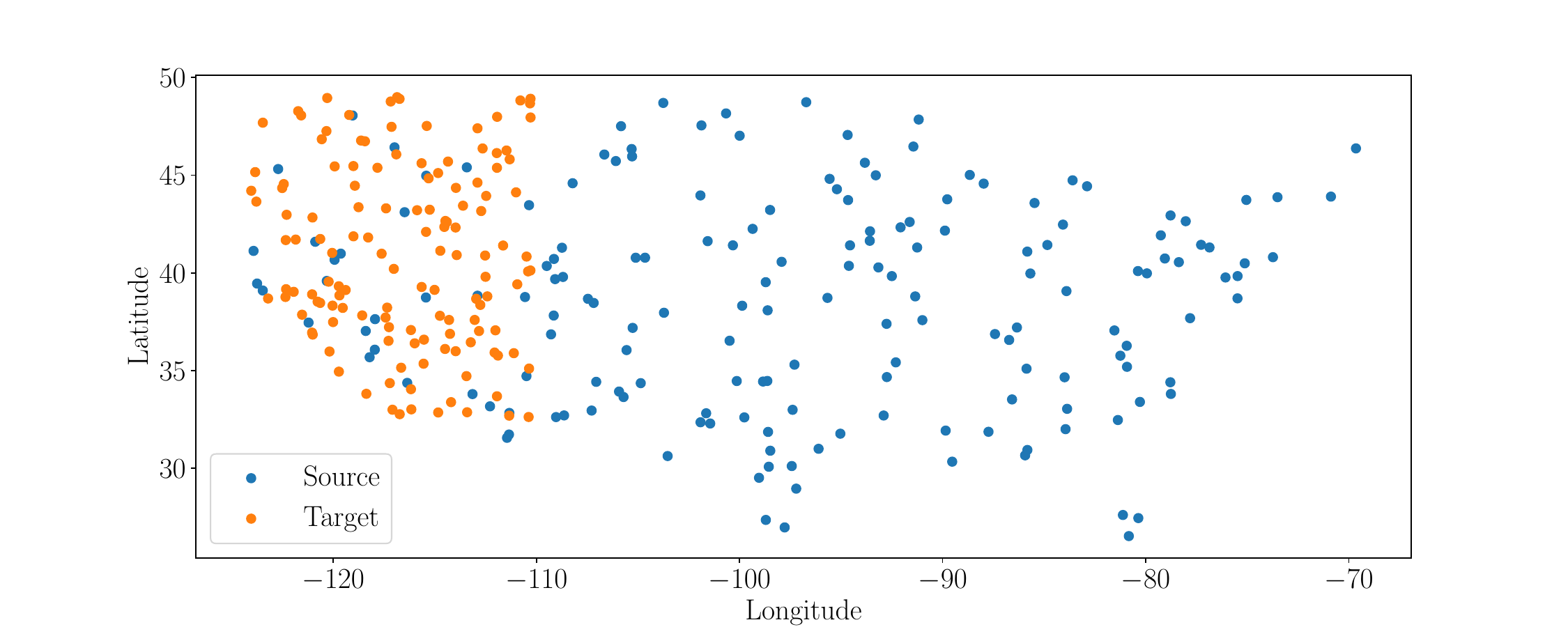}
    \caption{Split of the tree cover dataset in a target distribution in the West of the United States. Target locations are shown in orange, while source locations are shown in blue.}
    \label{fig:tree-cover-split-west}
\end{figure*}

\subsection{Estimating the Ground Truth Parameters to Evaluate Coverage}
\label{app:real-data-coverage-computation}

In order to evaluate coverage, we repeat the data subsampling process described above 250 times. Ideally, we would estimate the coverage as the proportion of these seeds in which the estimand $\TestParamOLSp$ falls inside the confidence interval we construct for each method. However, we cannot evaluate $\TestParamOLSp$ directly, even though we have access to the target responses. To account for sampling variability, we use our method and each baseline to construct a confidence interval for the difference,
\begin{align}
    \hat{\theta}_p^{\star} - \hat{\theta}_p
\end{align}
where $\hat{\theta}_p^{\star} = e_p^{\transpose}(\Xstart \Xstar)^{-1}\Xstart\Ystar$ is the estimated parameter using the target data (which our method and baselines don't have access to) and $\hat{\theta}_p$ is the estimated parameter we compute with each method using the source responses. If this confidence interval contains $0$, we count the method as having covered the true parameter, while if it doesn't we count the method as not having covered the true parameter. We estimate the variance of $\hat{\theta}_p^{\star}$ using the model-trusting standard errors $\hat{\sigma}^2 = \frac{1}{N-P} \sum_{n=1}^N r_n^2$ where $r_n$ are the residuals of the model fit on the training data. We expect this to inflate our estimate of the standard variance of the target OLS estimate if the response surface is nonlinear, as the residuals will be larger due to bias. By possibly overestimating and incorporating this sampling variability into confidence intervals, we expect the calculated coverages to overestimate the true coverages. The resulting coverages are shown in the top row of \cref{fig:tree-cover-coverage-main-west}.

In \cref{fig:tree-cover-coverage-both-west} we also report confidence intervals by calculating the proportion of times that $\hat{\theta}_p^{\star}$ is contained in each confidence interval. $\hat{\theta}_p $ is an unbiased estimate for $\TestParamOLSp$ whether or not the model is well-specified. However, we might expect that the coverages reported with this approach underestimate the actual coverage of each method's confidence intervals due to not accounting for sampling variability in $\hat{\theta}_p^{\star}$.

\begin{figure*}[!ht]
    \centering    \includegraphics[width=0.97\linewidth]{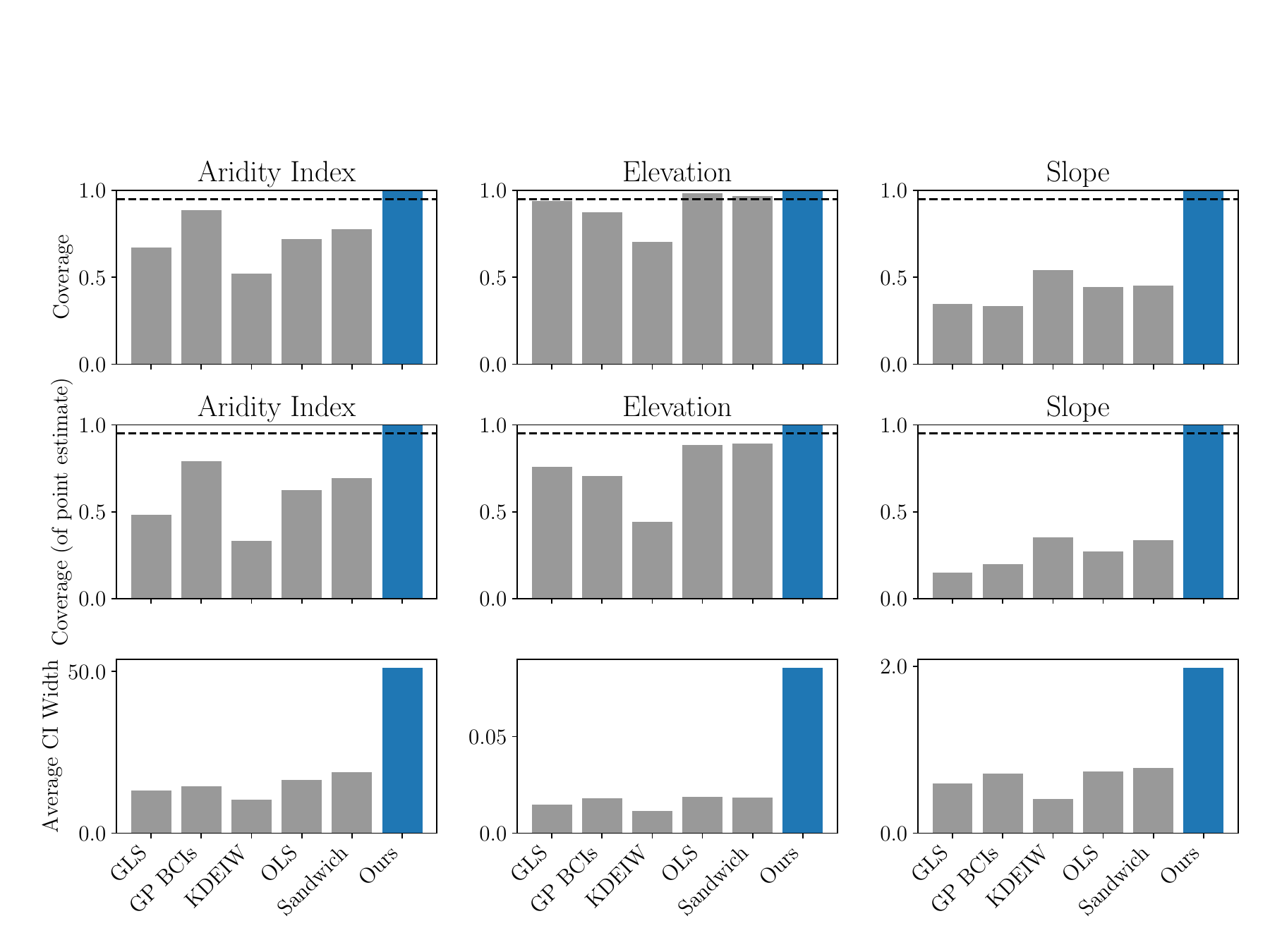}
    \caption{Coverages for the difference (upper), coverages for the point estimate (middle), and confidence interval widths (lower) for our method as well as 5 other methods for the West US data. Each column represents a parameter in the tree cover experiment. Only our method consistently achieves the nominal coverage.}
    \label{fig:tree-cover-coverage-both-west}
\end{figure*}

\subsection{Experiment with Varying Lipschitz Constant}
\label{app:tree-cover-varying-lipschitz}

In this section, we provide an additional experiment with these real data where the focus is to assess how varying the underlying Lipschitz constant in \cref{assum:lipschitz} changes coverage and interval width. For this experiment we consider 9 different values for the Lipschitz constant, $L \in \{0.001, 0.005, 0.01, 0.02, 0.05, 0.1, 0.2, 0.5, 1\}$. This range of values is a reasonable range of values that a practitioner with domain knowledge in the tree cover field might be willing to specify. Indeed, the smallest Lipschitz constant considered, $0.001$, corresponds to assuming that to have a $1\%$ increase in tree cover we need to move $1/0.001 = 1000$ kilometers. This is quite an extreme value, but in some parts of arid regions such as New Mexico and Arizona it can be true. On the other hand, the largest Lipschitz constant considered, $1$, corresponds to assuming that to have $1\%$ increase in tree cover we need to move $1$ kilometer. This is also extreme, but in regions in the US where elevation changes are very pronounced (e.g. in the Colorado Rockies), tree cover can vary sharply over very short distances. 

In \cref{fig:tree-cover-multiple-lipschitz}, we present the results of this experiment. We find that varying the Lipschitz constant within this range does not significantly impact coverage. Specifically, for \textit{slope} and \textit{elevation}, coverage remains consistent across all constants except $L = 0.001$. For \textit{aridity index}, coverage is low for $L \leq 0.1$ but exceeds $95\%$ for $L \geq 0.2$. Meanwhile, confidence intervals become noticeably wider for $L > 0.5$ while remaining relatively stable for smaller values.

\begin{figure*}
    \centering
    \includegraphics[width=\linewidth]{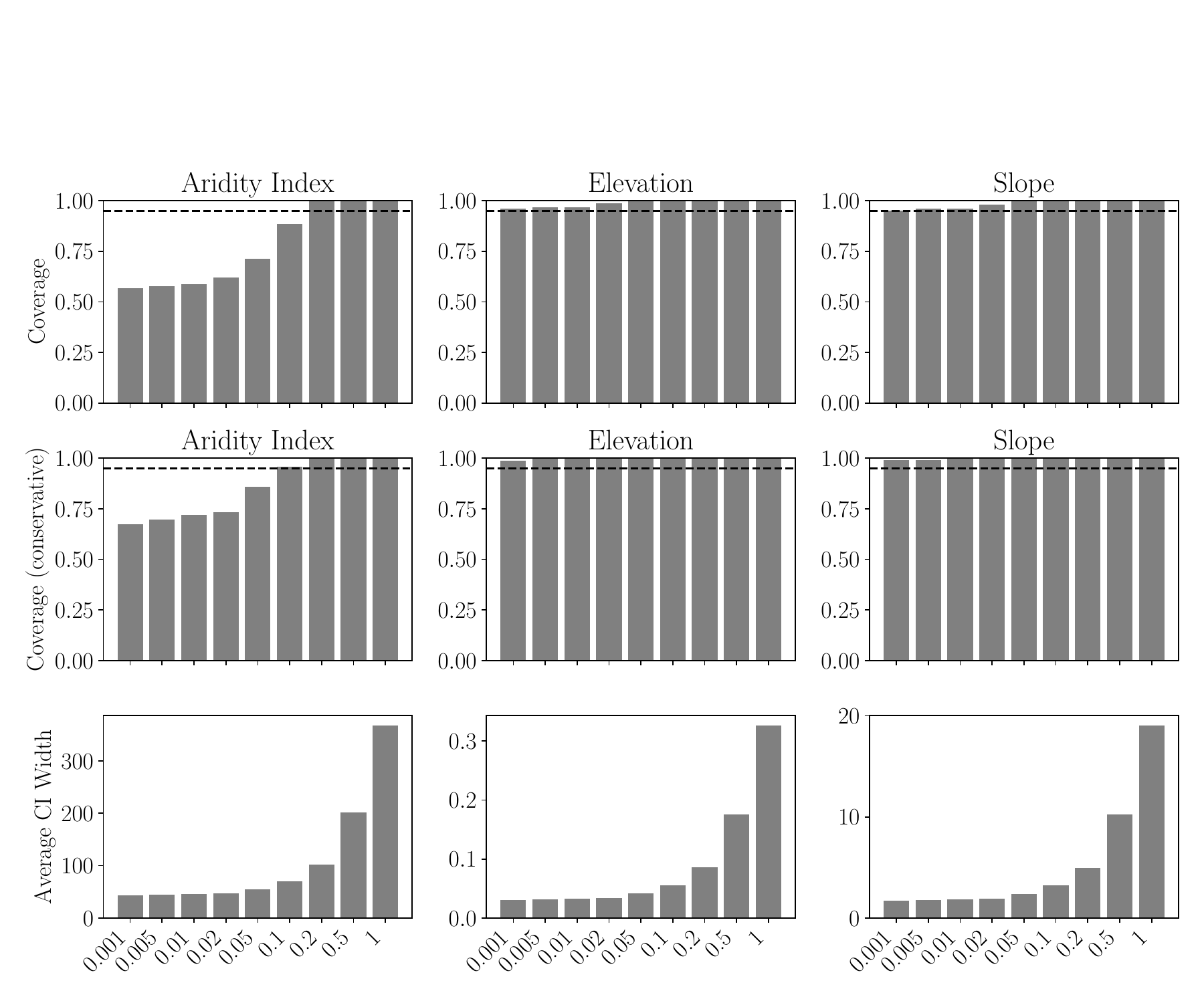}
    \caption{Coverage and average confidence interval widths over 250 seeds for 9 different values for the Lipschitz constant $L \in \{0.001, 0.005, 0.01, 0.02, 0.05, 0.1, 0.2, 0.5, 1\}$. The horizontal axis is shared across all plots.}
    \label{fig:tree-cover-multiple-lipschitz}
\end{figure*}

\begin{figure*}
    \centering    \includegraphics[width=0.97\linewidth]{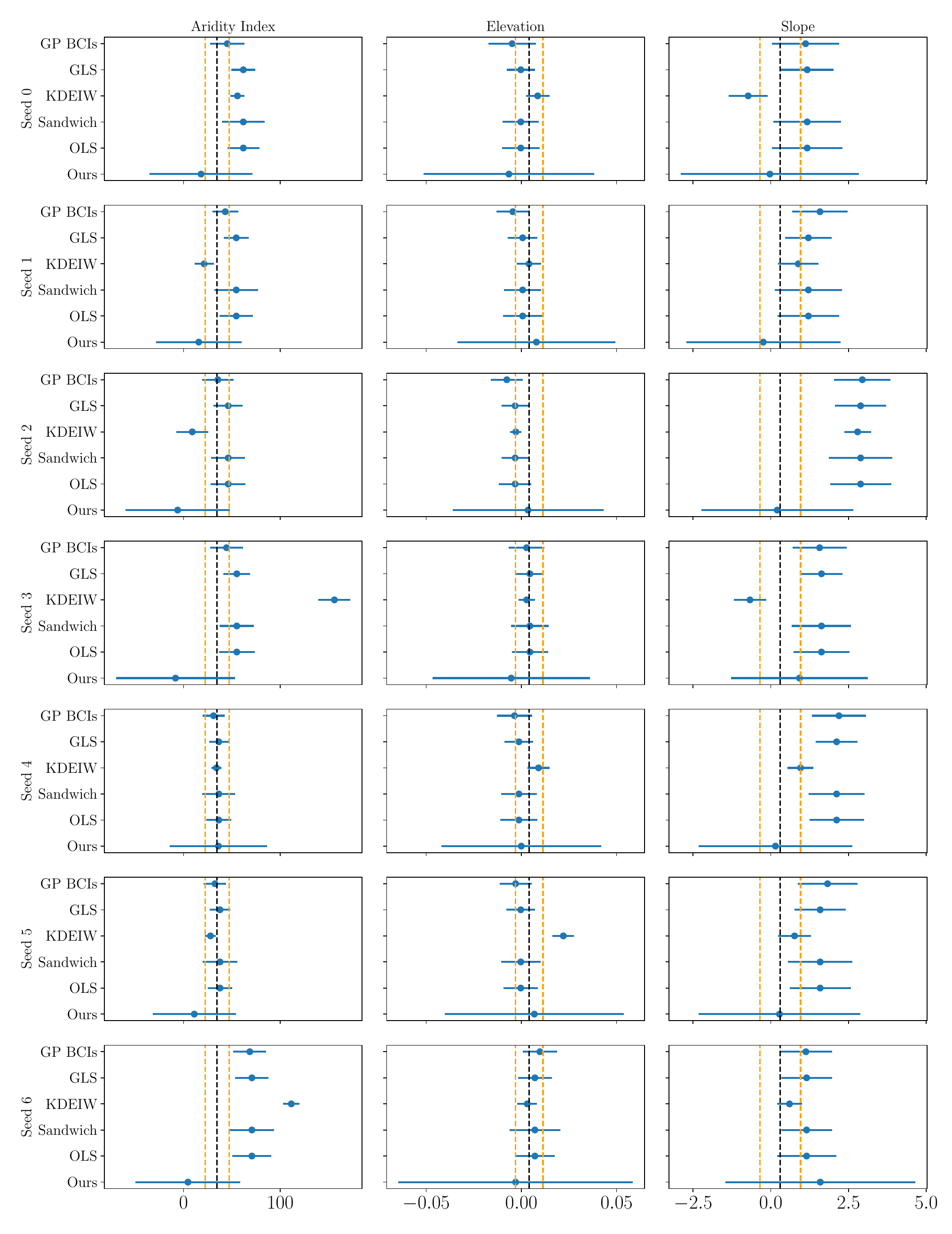}
    \caption{Confidence intervals for different seeds for the West US. Each row shows confidence intervals for the various methods over the three parameters for a given seed. The dashed vertical lines represent the true parameters (black is the point estimate, orange is a 95\% confidence interval). The blue dots the point estimates for the different methods, and the blue lines are the confidence intervals.}
    \label{fig:tree-cover-conf-intervals-west}
\end{figure*}

\begin{figure*}
    \centering
    \includegraphics[width=\linewidth]{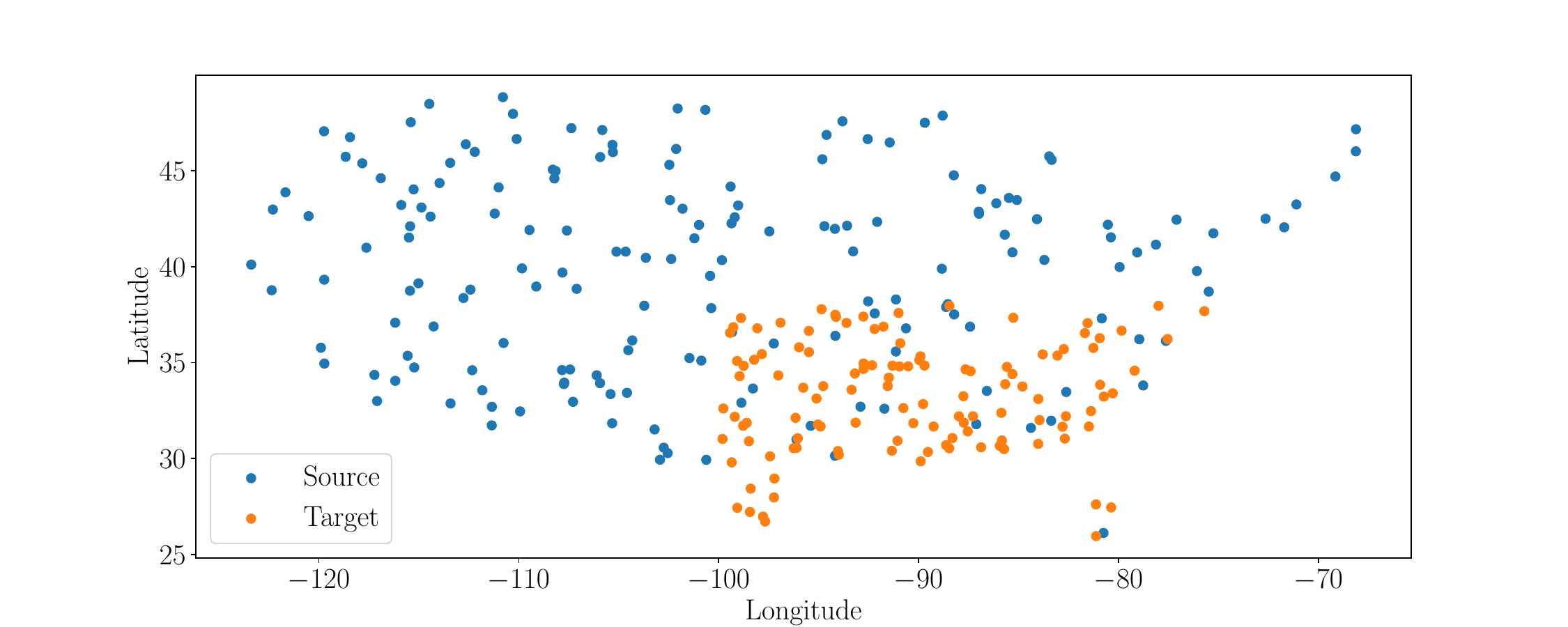}
    \caption{Spatial sites for the source (blue) and target (orange) data. The target data are chosen from the south-eastern part of the CONUS, whereas the source data cover the whole region.}
    \label{fig:tree-cover-split-southeast}
\end{figure*}

\subsection{Additional Experiment: Target South-East US}
\label{app:tree-cover-southeast}

In this experiment, we define our target region in the Southeastern portion of CONUS at locations with latitude in the range (25, 38) and longitude in the range (-100, -75). Out of all spatial points in this region, $50\%$ --- totaling 118 sites --- are designated as target data. Next, we select the source data by taking a uniform random sample of $20\%$ of the remaining spatial locations, repeated over 250 random seeds to assess coverage performance. Each seed yields 173 source locations. \cref{fig:tree-cover-split-southeast} illustrates the spatial split between source and target data for a representative seed. As before, as a preprocessing step we convert the (latitude, longitude) coordinates of each data point into radians. 

\paragraph{Results.} We report the results for the confidence interval coverage and width in \cref{fig:tree-cover-coverage-both-southeast}. As before, our method consistently achieves or exceeds 95\% nominal coverage for all parameters. All competing methods but KDEIW here achieves $95\%$ nominal coverage (or close to $95\%$ nominal coverage) for the aridity index and slope parameter when considering the coverage for the $\hat{\theta_p^{\star}} - \hat{\theta}_p$ (top row). All competing methods fall short of the nominal threshold for the elevation parameter. The wider intervals produced by our method (last row) reflect the trade-off between achieving reliable coverage and maintaining narrower intervals.

In the middle row we show the coverage for the point estimate $\hat{\theta_p^{\star}}$. Here we see how our method is the only one that achieves nominal coverage for all the parameters. In particular, all the other methods do not achieve nominal coverage for any of the parameters.

\paragraph{Varying Lipschitz constant.} Finally, we report also for this experiment results when varying the assumed Lipschitz constant. As explained in \cref{app:tree-cover-varying-lipschitz}, we consider 9 different values for the Lipschitz constant, $L \in \{0.001, 0.005, 0.01, 0.02, 0.05, 0.1, 0.2, 0.5, 1\}$. We report the results in \cref{fig:tree-cover-multiple-lipschitz-southeast}. As before, we find that varying the Lipschitz constant within this range does not significantly impact coverage. Specifically, here we see that for \textit{aridity index} and \textit{elevation}, coverage remains consistent across all constants above $L = 0.01$. For \textit{slope}, $95\%$ nominal coverage is achieved for $L \geq 0.05$. And --- as before --- confidence intervals become noticeably wider for $L > 0.5$ while remaining relatively stable for smaller values.

\begin{figure*}
    \centering    \includegraphics[width=0.97\linewidth]{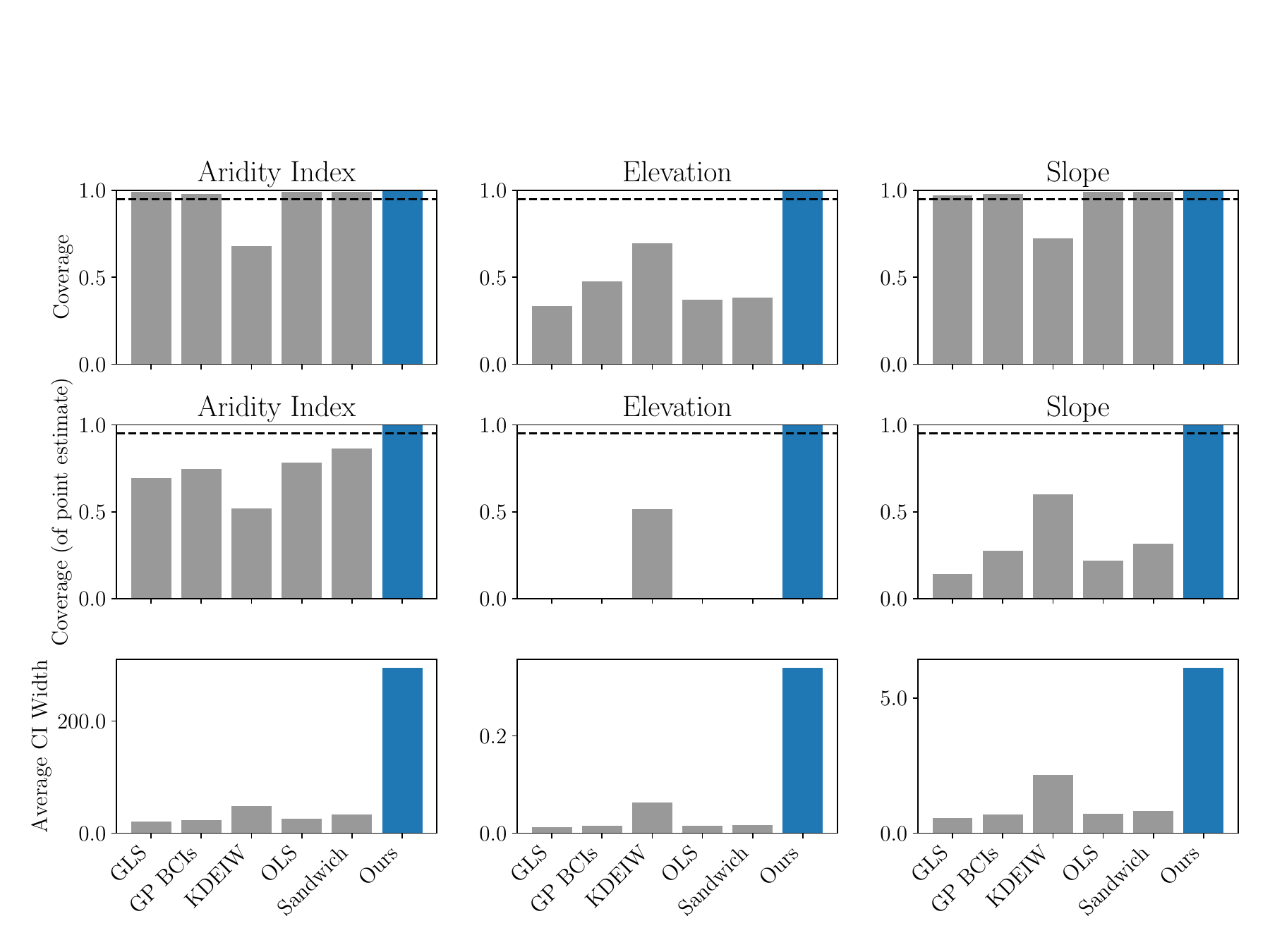}
    \caption{Coverages for the difference (upper), coverages for the point estimate (middle), and confidence interval widths (lower) for our method as well as 5 other methods for the Southeast US data. Each column represents a parameter in the tree cover experiment. Only our method consistently achieves the nominal coverage.}
    \label{fig:tree-cover-coverage-both-southeast}
\end{figure*}

\begin{figure*}
    \centering    \includegraphics[width=0.97\linewidth]{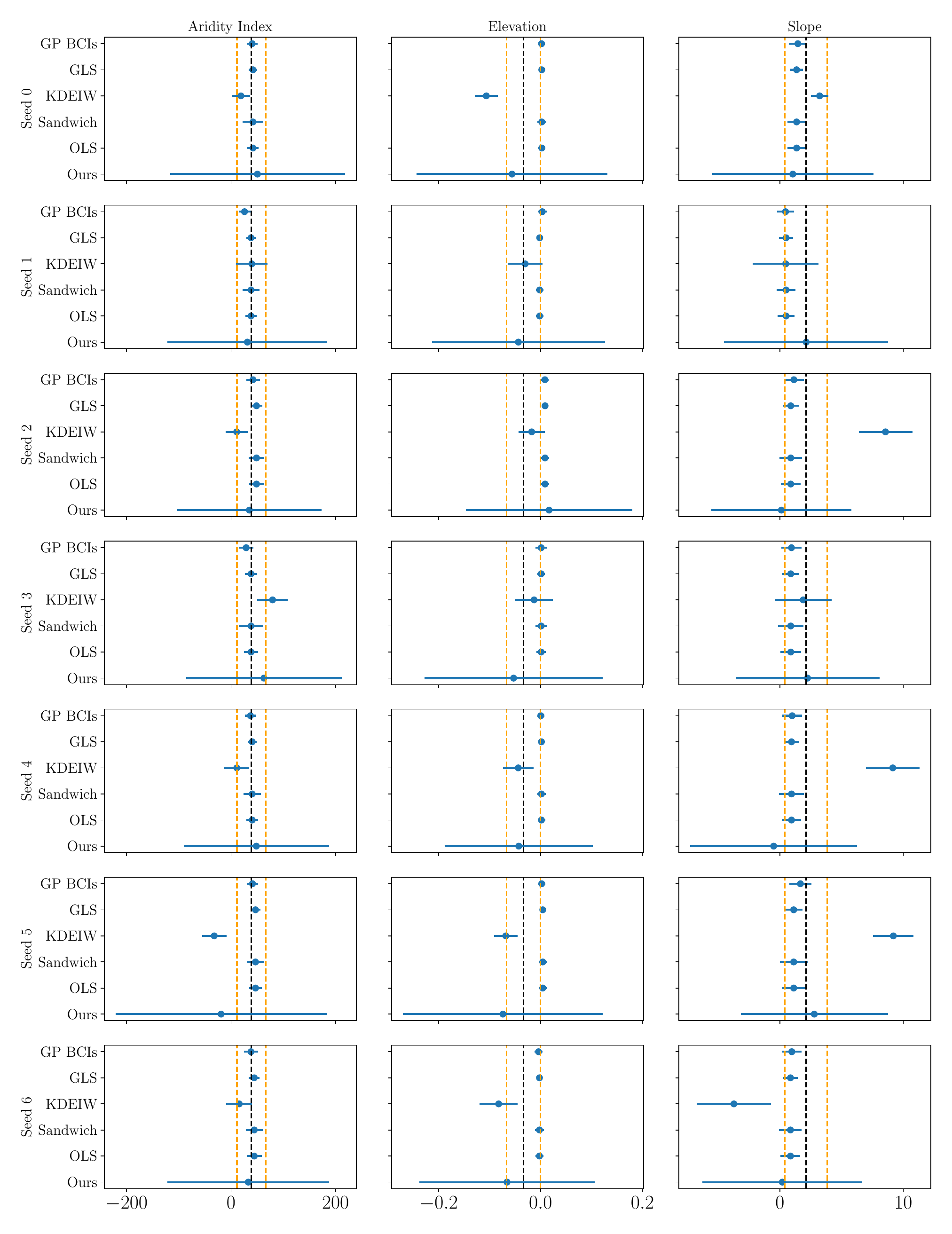}
    \caption{Confidence intervals for different seeds for the South East US. Each row shows confidence intervals for the various methods over the three parameters for a given seed. The dashed vertical lines represent the true parameters (black is the point estimate, orange is a 95\% confidence interval). The blue dots the point estimates for the different methods, and the blue lines are the confidence intervals.}
    \label{fig:tree-cover-confidence-intervals-southeast}
\end{figure*}

\begin{figure*}
    \centering
    \includegraphics[width=\linewidth]{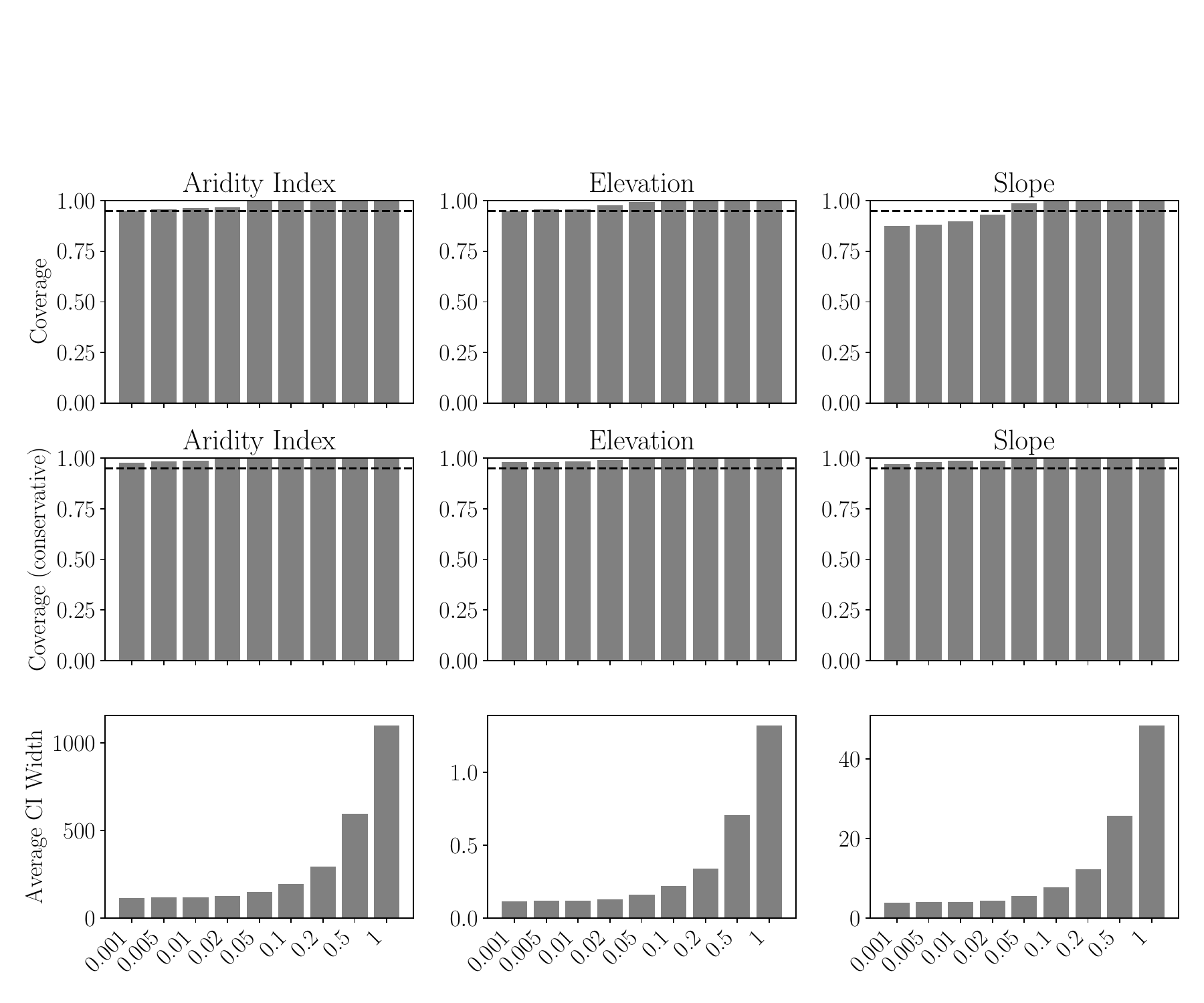}
    \caption{Coverage and average confidence interval widths over 250 seeds for 9 different values for the Lipschitz constant $L \in \{0.001, 0.005, 0.01, 0.02, 0.05, 0.1, 0.2, 0.5, 1\}$.}
    \label{fig:tree-cover-multiple-lipschitz-southeast}
\end{figure*}

\clearpage

\section{Proofs}\label{sec:proofs}
\subsection{Derivation of Target-Conditional Ordinary Least Squares}
\label{app:derivation-target-cond-ols}

We derive the OLS (Ordinary Least Squares) estimand for the given optimization problem:
\begin{align}
  \TestParamOLS = \arg\min_{\theta \in \mathbb{R}^P} \mathbb{E}\left[\sum_{m=1}^M (\Ystar_m - \theta^{\top}\Xstar_m)^2 \Big| \Sstar_m\right],  
\end{align}
where $\Ystar_m$ is unobserved, and $\Xstar_m$ is observed and a fixed function of $\Sstar_m$.

First, expand the squared term inside the expectation:
\begin{align}
    (\Ystar_m - \theta^{\top}\Xstar_m)^2 = (\Ystar_m)^2 - 2\Ystar_m \theta^{\top}\Xstar_m + (\theta^{\top}\Xstar_m)^2.  
\end{align}

Substitute this back into the expectation:
\begin{align}
   \mathbb{E}\left[\sum_{m=1}^M (\Ystar_m - \theta^{\top}\Xstar_m)^2 \Big| \Sstar_m\right] = \mathbb{E}\left[\sum_{m=1}^M (\Ystar_m)^2 - 2\Ystar_m \theta^{\top}\Xstar_m + (\theta^{\top}\Xstar_m)^2 \Big| \Sstar_m\right] 
\end{align}

Since the expectation is linear, we can separate the terms:
\begin{align}
  \mathbb{E}\left[\sum_{m=1}^M (\Ystar_m)^2 \Big| \Sstar_m\right] - 2 \mathbb{E}\left[\sum_{m=1}^M \Ystar_m \theta^{\top}\Xstar_m \Big| \Sstar_m\right] + \mathbb{E}\left[\sum_{m=1}^M (\theta^{\top}\Xstar_m)^2 \Big| \Sstar_m\right]  
\end{align}

Now we can simplify each of the terms as follows: 
\begin{itemize}
    \item The first term, $\sum_{m=1}^M \mathbb{E}\left[(\Ystar_m)^2 \Big| \Sstar_m\right]$, does not depend on $\theta$, so it can be treated as a constant with respect to the optimization problem
    \item The second term, $-2 \mathbb{E}\left[\sum_{m=1}^M \Ystar_m \theta^{\top}\Xstar_m \Big| \Sstar_m\right]$, can be rewritten using the linearity of expectation, the fact that \(\theta\) is not random, and \cref{assum:cov-fixed-fns}:
    \begin{align}
       -2 \theta^{\top} \sum_{m=1}^M \Xstar_m \mathbb{E}\left[\Ystar_m  \Big| \Sstar_m\right] 
    \end{align} 
    \item The third term, $\mathbb{E}\left[\sum_{m=1}^M (\theta^{\top}\Xstar_m)^2 \Big| \Sstar_m\right]$, is non-random by \cref{assum:cov-fixed-fns} and can be rewritten as:
    \begin{align}
        \sum_{m=1}^M (\theta^{\top} \Xstar_m)^2 
    \end{align}
\end{itemize}

The optimization problem then becomes
\begin{align}
    \TestParamOLS = \arg\min_{\theta \in \mathbb{R}^P} \left\{ \text{constant} -2 \theta^{\top} \sum_{m=1}^M \Xstar_m \mathbb{E}\left[\Ystar_m  \Big| \Sstar_m\right]  + \sum_{m=1}^M (\theta^{\top} \Xstar_m)^2 \right\}.
\end{align}
And since the constant term does not affect the optimization, we can drop it to get 
\begin{align}
    \TestParamOLS = \arg\min_{\theta \in \mathbb{R}^P} \left\{-2 \theta^{\top} \sum_{m=1}^M \Xstar_m \mathbb{E}\left[\Ystar_m  \Big| \Sstar_m\right]  + \sum_{m=1}^M (\theta^{\top} \Xstar_m)^2 \right\}
\end{align}

To find the minimizer, we take the derivative of the objective function with respect to $\theta$ and set it to zero:
\begin{align}
    \frac{\partial}{\partial \theta} \left\{-2 \theta^{\top} \sum_{m=1}^M \Xstar_m \mathbb{E}\left[\Ystar_m  \Big| \Sstar_m\right]  + \sum_{m=1}^M (\theta^{\top} \Xstar_m)^2 \right\} &= 0 \\
    -2 \sum_{m=1}^M \Xstar_m \mathbb{E}\left[\Ystar_m  \Big| \Sstar_m\right]  + 2 \sum_{m=1}^M \Xstar_m (\Xstar_m)^{\top}  \theta = 0
\end{align}

And by inverting this and using the fact that we assumed that $\Xstart\Xstar$ is invertible 
\begin{align}
    \TestParamOLS = \left( \sum_{m=1}^M \Xstar_m (\Xstar_m)^{\top}  \theta \right)^{-1} \sum_{m=1}^M \Xstar_m \mathbb{E}\left[\Ystar_m  \Big| \Sstar_m\right]
\end{align}

which in matrix form can be written as 

\begin{align}
\TestParamOLS &= (\Xstart\Xstar)^{-1}\Xstart\EE[\Ystar|\Sstar].
\end{align}

as in \cref{eqn:test-set-conditional-ols}.

\subsection{Proof of Theorem~\ref{thm:ci}}\label{app:proof-ci-thm}
In the main text, we stated our confidence interval for the $1$NN choice of $\Psi$. We now state version for general non-negative matrices with columns summing to $1$.
\begin{theorem}
    \label{thm:ci-general-psi}
    Suppose $(\Sstar_m, \Xstar_m, \Ystar_m)_{m=1}^M$ and $(S_n, X_n, Y_n)_{n=1}^N$ satisfy \cref{assum:cov-fixed-fns,assum:lipschitz,assum:invertibility,assum:test-train-dgp}. 
    Define $w$ as in \cref{alg:lipschitz_ci}. For any $\Psi \in \RR^{M \times N}$, a matrix with non-negative entries with columns summing to $1$, define $v^{\Psi} = w\Psi$. As in the main text, take $\hat{\theta}_p^{\Psi}=    e_p^\transpose\left(\Xstart \Xstar \right)^{-1}\Xstart \Psi Y$. 
    Define $c=\sigma\|v^{\Psi}\|_2$, with $\sigma^2$ the variance of the additive noise from \cref{assum:test-train-dgp}. Define the (random) interval $I^{\Psi}$ as in  \cref{alg:lipschitz_ci} with known $\sigma^2$.
    Then with probability at least $1-\alpha$, $\TestParamOLSp \in I^{\Psi}$. That is, $I^{\Psi}$ has coverage (conditional on the test locations) at least $1-\alpha$. 
\end{theorem}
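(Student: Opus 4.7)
The plan is to reduce the coverage statement to a direct application of Lemma~\ref{lem:shortest-ci} after showing that $\TestParamOLSp - \hat{\theta}_p^{\Psi}$ is a Gaussian with a bounded (deterministic) mean and known variance $c^2$. First I would substitute the formula for $\TestParamOLSp$ from \cref{eqn:test-set-conditional-ols} together with \cref{assum:test-train-dgp}. Writing $f(\Sstar) = (f(\Sstar_1), \dots, f(\Sstar_M))^{\top}$, this gives $\TestParamOLSp = w^{\top} f(\Sstar)$, and similarly $\hat{\theta}_p^{\Psi} = w^{\top}\Psi Y = (v^{\Psi})^{\top}Y$. Using $Y_n = f(S_n) + \epsilon_n$ then reproduces the decomposition already recorded in \cref{eqn:bias-variance-decomp}:
\begin{equation*}
\TestParamOLSp - \hat{\theta}_p^{\Psi}
 = \underbrace{\sum_{m=1}^M w_m f(\Sstar_m) - \sum_{n=1}^N v^{\Psi}_n f(S_n)}_{=:\,b}
  \;-\; \sum_{n=1}^N v^{\Psi}_n \epsilon_n.
\end{equation*}

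Next I would argue that $b$ is a non-random constant (since the source and target locations are fixed and $f$ is fixed), and that the second term is $\mathcal{N}(0, \sigma^2 \|v^{\Psi}\|_2^2) = \mathcal{N}(0, c^2)$ by independence of the Gaussian noise. Therefore $\TestParamOLSp - \hat{\theta}_p^{\Psi} \sim \mathcal{N}(b, c^2)$. The next step is to bound $|b|$: invoking \cref{assum:lipschitz}, and the fact that the supremum in \cref{eqn:worst-case-bias} is achieved by an $L$-Lipschitz function (since both $f$ and $-f$ are $L$-Lipschitz), we obtain
\begin{equation*}
|b| \;\le\; \sup_{g \in \lipschitzfns} \Bigl| \sum_{m=1}^M w_m g(\Sstar_m) - \sum_{n=1}^N v^{\Psi}_n g(S_n)\Bigr| \;=\; B,
\end{equation*}
where $B$ is the output of the linear program derived from Kantorovich--Rubinstein duality (the details of which are deferred to \cref{app:proof-weights}). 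Here the column-sum condition on $\Psi$ (which implies $\sum_n v_n^{\Psi} = \sum_m w_m$) ensures that adding a constant to $g$ does not change the objective, so the supremum over all $L$-Lipschitz functions is finite and matches the LP value.

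Finally, I would apply \cref{lem:shortest-ci} with $\tilde{c} = c$: since $b \in [-B, B]$ and $\TestParamOLSp - \hat{\theta}_p^{\Psi} \sim \mathcal{N}(b, c^2)$, the lemma guarantees
\begin{equation*}
\Pr\!\bigl(\TestParamOLSp - \hat{\theta}_p^{\Psi} \in [-B - c\Delta, B + c\Delta]\bigr) \;\ge\; 1-\alpha,
\end{equation*}
with $\Delta = \Delta(\alpha)$ as defined in the algorithm. Adding $\hat{\theta}_p^{\Psi}$ to both endpoints yields exactly the interval $I^{\Psi}$ from Step~\ref{step:ci}, establishing coverage.

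The bulk of the work is already encapsulated by the paper's preliminaries: the decomposition \cref{eqn:bias-variance-decomp}, the LP characterization of $B$, and \cref{lem:shortest-ci}. The only genuinely non-trivial step is verifying that the worst-case Lipschitz bound \cref{eqn:worst-case-bias} is indeed computable as a finite linear program, which is what I expect to be the main obstacle and which relies on the mass-balance condition $\sum_n v_n^{\Psi} = \sum_m w_m$; this needs a brief check that the stated row/column sum hypothesis on $\Psi$ enforces it in the generic setting of \cref{thm:ci-general-psi}.
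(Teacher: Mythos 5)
Your proposal is correct and follows essentially the same route as the paper's own proof: the bias--randomness decomposition of $\TestParamOLSp - \hat{\theta}_p^{\Psi}$, the observation that the difference is $\mathcal{N}(b, \sigma^2\|v^{\Psi}\|_2^2)$ with deterministic $b$ bounded by the Lipschitz supremum $B$, and a direct application of \cref{lem:shortest-ci}. Your added remark that the mass-balance condition $\sum_n v_n^{\Psi} = \sum_m w_m$ is what makes the supremum finite is exactly the point the paper defers to \cref{app:proof-weights}.
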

\begin{proof}
    We being as in the main text. We show the difference between our estimand and estimator is normally distributed. To that end, we decompose the difference between our estimand and estimator into a bias term and a mean-zero noise term.
\begin{align}
    \label{eqn:wv_intro-app}
    \TestParamOLSp - \hat{\theta}_p^{\Psi}
    &=
    \sum_{m=1}^M w_m f(\Sstar_m) - \sum_{n=1}^N v^{\Psi}_nY_n,
\end{align}
for $w:= e_p^\transpose\left(\Xstart\Xstar \right)^{-1}\Xstart \in \RR^{M}$ and $v^{\Psi} := w\Psi  \in \RR^{N}$. By \cref{assum:test-train-dgp}, the righthand side of \cref{eqn:wv_intro-app} can be written 
\begin{align}\label{eqn:bias-variance-decomp-app}
    \sum\nolimits_{m=1}^M w_m f(\Sstar_m) - \sum\nolimits_{n=1}^N v^{\Psi}_nY_n \underbrace{\sum_{m=1}^M w_m f(\Sstar_m) \!-\!\sum_{n=1}^N \!\!v^{\Psi}_n f(S_n)}_{\text{bias}} - \underbrace{\sum_{n=1}^N v^{\Psi}_n\epsilon_n}_{\text{randomness}}.
\end{align}

Since the spatial locations are fixed, the bias term is not random and can be written as $b \in \RR$. We can calculate the variance directly,
\begin{align}
    \mathbb{V}[\TestParamOLSp  -  \hat{\theta}_p^{\Psi} ] = \mathbb{V}[\sum_{n=1}^N v^{\Psi}_n\epsilon_n] 
     =\sum_{n=1}^N (v^{\Psi}_n)^2 \mathbb{V}[\epsilon_n] 
     = \sigma^2 \|v^{\Psi}\|^2_2.
\end{align}
We used the $\epsilon_n$ are independent and identically distributed with variance $\sigma^2$ (\cref{assum:test-train-dgp}).

It follows that
\begin{align}\label{eqn:diff-gaussian-app}
    \TestParamOLSp  -  \hat{\theta}_p^{\Psi} \sim \mathcal{N}(b, \sigma^2\|v^{\Psi}\|_2^2)
\end{align}

To bound the bias $b$, we use \cref{assum:lipschitz} to write 
\begin{align}
    |b| \leq \sup_{g \in \lipschitzfns} \left|\sum_{m=1}^M w_m g(\Sstar_m) -\sum_{n=1}^N \!v^{\Psi}_n g(S_n)\right|.\! \label{eqn:worst-case-bias-app}
\end{align}

We can therefore apply \cref{lem:shortest-ci} to complete the proof.
\end{proof}
\subsection{Computing an Upper Bound on the Bias of Our Estimand with Wasserstein-\texorpdfstring{$1$}{1} Distance}
\label{app:proof-weights}

First of all observe that if $\sum_{m=1}^M w_m = 1, \sum_{n=1}^N v_n^{\Psi} = 1$, $v_n^{\Psi} \geq 0$ for $1 \leq n \leq N$, and $w_m \geq 0$ for $1 \leq m \leq M$ then the supremum in \cref{eqn:worst-case-bias} would be equal to a Wasserstein-$1$ distance by Kantorovich-Rubinstein duality \citep[Theorem 5.10, Case 5.16]{villani2009optimal}.

Next, consider what happens if $\sum_{m=1}^M w_m - \sum_{n=1}^N v_n^{\Psi} \neq 0$. We show that the right-hand side of \cref{eqn:worst-case-bias} is infinite. Assume by contradiction that there exists a $C> 0$ that upper bounds this supremum. Because $\sum_{m=1}^M w_m - \sum_{n=1}^N v_n^{\Psi} \neq 0$ and all constant functions are $L$-Lipschitz, for any $\gamma > 0$, taking, for all $S$, $g(S) = G = \frac{C + \gamma}{\sum_{m=1}^M w_m - \sum_{n=1}^N v_n^{\Psi}}$
\begin{align}
\sup_{g \in \lipfns} \left \vert \sum_{m=1}^M w_m g(\Sstar_m) - \sum_{n=1}^N v_n^{\Psi} g(S_n)\right\vert &\geq \Bigg \vert \sum_{m=1}^M w_mG  \nonumber  - \sum_{n=1}^N v_n^{\Psi} G\Bigg\vert \\
& = C + \gamma.
\end{align}
This contradicts the assumption that $C$ is an upper bound on the supremum. Because $C$ was arbitrary, the right hand side of \cref{eqn:worst-case-bias} is infinite, as desired. 

Our assumption that $\Psi$ is a non-negative matrix whose columns sum to 1 avoids this situation. We formalize our upper bound on the bias in the following proposition.

\begin{proposition}
Suppose that $\sum_{n=1}^N  \Psi_{m,n} = 1$ for all $m$. Let $w \in \RR^{M}$ and $v^{\Psi}= (w\Psi)^{\transpose} \in \RR^{N}$.  
\begin{align}
    \sup_{g \in \lipschitzfns} \Bigg|\sum_{m=1}^M w_m &g(\Sstar_m) -\sum_{n=1}^N v^{\Psi}_n g(S_n)\Bigg| \nonumber \\
    &= AL W_1\left(\sum_{m \in I} \frac{w_m}{A} \delta_{\Sstar_m}  + \sum_{n \in I'} \frac{-v_n^{\Psi}}{A} \delta_{S_n} , \sum_{m \in J} \frac{-w_m}{A} \delta_{\Sstar_m}  + \sum_{n \in J'} \frac{v_n^{\Psi}}{A} \delta_{S_n}\right),
\end{align}
where $I = \{ 1 \leq i \leq M : w_i \geq 0\}$, $I' = \{1 \leq i \leq N: v_i^{\Psi} < 0\}$,  $J = \{ 1 \leq j \leq M : w_j < 0\}$ and $J' = \{1 \leq j \leq N : v_j^{\Psi} \geq 0\}$ and $A = \frac{1}{2}\left(\sum_{m=1}^M |w_m| + \sum_{n=1}^N |v_n^{\Psi}|\right)$.
\end{proposition}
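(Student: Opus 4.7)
The plan is to express both sides as functionals of a signed measure on $\spatialdomain$ and then invoke Kantorovich--Rubinstein duality. Concretely, I would introduce the signed measure
\[
\mu := \sum_{m=1}^M w_m\,\delta_{\Sstar_m} \;-\; \sum_{n=1}^N v_n^{\Psi}\,\delta_{S_n}
\]
and first verify that $\mu(\spatialdomain) = 0$. Since $v_n^{\Psi} = \sum_m w_m \Psi_{m,n}$, the row-sum hypothesis $\sum_n \Psi_{m,n} = 1$ gives $\sum_n v_n^{\Psi} = \sum_m w_m \sum_n \Psi_{m,n} = \sum_m w_m$, so the total mass of $\mu$ vanishes. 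This is precisely the step that the paragraph preceding the proposition flags as necessary for the supremum to be finite.

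Next I would strip the absolute value. Because $\lipschitzfns$ is closed under $g \mapsto -g$,
\[
\sup_{g \in \lipschitzfns}\Bigl|\sum_{m=1}^M w_m g(\Sstar_m) - \sum_{n=1}^N v_n^{\Psi} g(S_n)\Bigr| \;=\; \sup_{g \in \lipschitzfns} \int g\,d\mu.
\]
Then I would perform the Jordan decomposition $\mu = \mu^+ - \mu^-$ atom by atom, sorting each term according to whether it sits at a target location (with $w_m \ge 0$ giving $m\in I$, else $m \in J$) or at a source location (where $-v_n^{\Psi} > 0$ gives $n \in I'$, else $n \in J'$). This yields exactly
\[
\mu^+ = \sum_{m \in I} w_m\,\delta_{\Sstar_m} + \sum_{n \in I'} (-v_n^{\Psi})\,\delta_{S_n}, \qquad \mu^- = \sum_{m \in J} (-w_m)\,\delta_{\Sstar_m} + \sum_{n \in J'} v_n^{\Psi}\,\delta_{S_n}.
\]
A direct computation gives $\mu^+(\spatialdomain) + \mu^-(\spatialdomain) = \sum_m |w_m| + \sum_n |v_n^{\Psi}| = 2A$, and combined with $\mu(\spatialdomain) = 0$ this forces $\mu^+(\spatialdomain) = \mu^-(\spatialdomain) = A$.

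The last step is Kantorovich--Rubinstein duality. For 1-Lipschitz test functions and probability measures $P, Q$ on $(\spatialdomain, d_\spatialdomain)$, duality gives $\sup_{\mathrm{Lip}(g) \le 1} \int g\,d(P-Q) = W_1(P, Q)$. Rescaling the Lipschitz constant by $L$ and rescaling $\mu^{\pm}$ to probability measures by dividing by $A$,
\[
\sup_{g \in \lipschitzfns} \int g\,d\mu \;=\; L\cdot A \cdot W_1\!\bigl(\mu^+/A,\; \mu^-/A\bigr),
\]
and substituting the explicit expressions for $\mu^{\pm}/A$ recovers the two discrete probability measures appearing in the statement.

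The main obstacle is essentially just careful sign bookkeeping: matching the Jordan decomposition to the specific index partition $(I, I', J, J')$ used in the statement, and confirming that the row-sum hypothesis on $\Psi$ is the exact condition needed to make both $\mu^+/A$ and $\mu^-/A$ probability measures. Once that is established, the duality step is standard and no further machinery is required.
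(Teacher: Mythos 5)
Your proposal is correct and follows essentially the same route as the paper's proof: verify that the row-sum condition on $\Psi$ forces the total signed mass to vanish, split the weights by sign into the index sets $I, I', J, J'$, show each part has mass $A$, and apply Kantorovich--Rubinstein duality after rescaling by $L$ and $A$. Your Jordan-decomposition phrasing is just a cleaner repackaging of the paper's explicit index bookkeeping, so there is nothing substantively different to flag.
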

\begin{proof}
First, observe that $\sum_{n=1}^N v_n^{\Psi} = \sum_{n=1}^N (w\Psi)_n = \sum_{n=1}^N \sum_{m=1}^M w_m\Psi_{m,n} = \sum_{m=1}^M w_m \sum_{n=1}^N  \Psi_{m,n} = \sum_{m=1}^M w_m$. 

Next, we normalize the weights to sum in absolute value to $2$, and rescale the function class to consist of $1$-Lipschitz function. Define $A = \frac{1}{2}\left(\sum_{m=1}^M |w_m| + \sum_{n=1}^N |v_n^{\Psi}|\right)$. Then
\begin{align}
     \sup_{g \in \lipfns} \Big \vert \sum_{m=1}^M w_m g(\Sstar_m) \! - \! \sum_{n=1}^N v_n^{\Psi} &g(S_n)\Big\vert 
       \!=\! \sup_{g \in \lipfnsone} \left \vert \sum_{m=1}^M w_m Lg(\Sstar_m) \!-\! \sum_{n=1}^N v_n^{\Psi} Lg(S_n)\right\vert \\
     & = AL \sup_{g \in \lipfnsone} \left \vert \sum_{m=1}^M \frac{w_m}{A} g(\Sstar_m) - \sum_{n=1}^N \frac{v_n^{\Psi}}{A} g(S_n)\right\vert 
\end{align}
where we have used that a function is $L$-Lipschitz if and only if it can be written by scaling a $1$-Lipschitz function by $L$.

Define $I = \{ 1 \leq i \leq M : w_i \geq 0\}$, $I' = \{1 \leq i \leq N: v_i^{\Psi} < 0\}$,  $J = \{ 1 \leq j \leq M : w_j < 0\}$ and $J' = \{1 \leq j \leq N : v_j^{\Psi} \geq 0\}$. Then 
\begin{align}
    AL &\sup_{g \in \lipfnsone} \left \vert \sum_{m=1}^M \frac{w_m}{A} g(\Sstar_m) - \sum_{n=1}^N \frac{v_n^{\Psi}}{A} g(S_n)\right\vert  \\ &= AL \sup_{g \in \lipfnsone}\left \vert \sum_{m \in I} \frac{w_m}{A} g(\Sstar_m)  + \sum_{n \in I'} \frac{-v_n^{\Psi}}{A} g(S_n) 
   - \left(\sum_{m \in J} \frac{-w_m}{A} g(\Sstar_m)  + \sum_{n \in J'} \frac{v_n^{\Psi}}{A} g(S_n) \right)\right\vert 
\end{align}
Because $\sum_{n=1}^N v_n^{\Psi}=\sum_{m=1}^M w_m$ and $I$ and $J$ partition the index sets,
\begin{align}
    \sum_{m \in I} w_m + \sum_{n \in I'} -v_n^{\Psi} = \sum_{m \in J} -w_m  + \sum_{n \in J'} v_n^{\Psi}.
\end{align}
And because the set $I, I', J, J'$ sort the indices into positive and negative parts
\begin{align}
    \sum_{m \in I} w_m + \sum_{m \in J} -w_m + \sum_{n \in I'} -v_n^{\Psi} + \sum_{n \in J'} v_n^{\Psi} = \sum_{m=1}^M |w_m| + \sum_{n=1}^N |v_n^{\Psi}|=2A.
\end{align}

Therefore, 
\begin{align}
    \sum_{m \in I} \frac{w_m}{A} \delta_{\Sstar_m}  + \sum_{n \in I'} \frac{-v_n^{\Psi}}{A} \delta_{S_n} \text{\, and \,} \sum_{m \in J} \frac{-w_m}{A} \delta_{\Sstar_m}  + \sum_{n \in J'} \frac{v_n^{\Psi}}{A} \delta_{S_n}
\end{align}
are probability measures. We can apply Kantorovich-Rubinstein duality to write,
\begin{align}
    B \leq AL W_1\left(\sum_{m \in I} \frac{w_m}{A} \delta_{\Sstar_m}  + \sum_{n \in I'} \frac{-v_n^{\Psi}}{A} \delta_{S_n} , \sum_{m \in J} \frac{-w_m}{A} \delta_{\Sstar_m}  + \sum_{n \in J'} \frac{v_n^{\Psi}}{A} \delta_{S_n}\right).
\end{align}
where $W_1$ denotes the 1-Wasserstein distance. 
\end{proof}

We compute the Wasserstein-$1$ distance by linear programming, see discussion in \cref{app:implementation-wasserstein}. Scalable upper bounds could also be computed by exhibiting a coupling between the measures (for example by solving an entropy regularized optimal transport problem). See \citep[Chapters 3 and 4]{peyre_computational_2019} for details on computation of Wasserstein distances.

\subsection{Proof of Lemma~\ref{lem:shortest-ci}}
\label{proof:shortest-ci}
\begin{proof}[Proof of \cref{lem:shortest-ci}]

We aim to prove that the interval $[-B - \Delta, B + \Delta]$ is the narrowest $1 - \alpha$ confidence interval that is valid for all $b \in [-B, B]$ where $\Delta$ is the solution of $   \Phi\left(\Delta\right) - \Phi\left(-2B/\tilde{c}-\Delta\right) = 1-\alpha$.

\textbf{Ensuring Coverage Probability.} Suppose that $X_b \sim N(b, \tilde{c}^2)$ for $b \in [-B - \Delta, B + \Delta]$. Then,
\[
\mathrm{Pr}(X_b \in [-B - \Delta, B + \Delta]) = \Phi\left(\frac{B + \Delta - b}{\tilde{c}}\right) - \Phi\left(\frac{-B - \Delta - b}{\tilde{c}}\right),
\]
To construct a valid confidence interval for any $b \in [-B,B]$, we require that
\[
\mathrm{Pr}(X_b \in [-B - \Delta, B + \Delta]) \geq 1 - \alpha, \quad \forall b \in [-B, B].
\]
This ensures  $1-\alpha$ coverage over all possible values of $b$ in $[-B, B]$.

\textbf{Reduce the problem to Worst-Case Coverage.} To find the narrowest interval, we identify the worst-case value of $b$ that minimizes the coverage probability. Let
\begin{align*}
C(b; \Delta) = \Phi\left(\frac{B + \Delta - b}{\tilde{c}}\right) - \Phi\left(\frac{-B - \Delta - b}{\tilde{c}}\right),
\end{align*}
denote the coverage probability of the interval $[-B-\Delta, B+ \Delta]$ for $X_b\sim \mathcal{N}(b, \tilde{c}^2)$.
In order to ensure the interval is valid for all $b$ coverage, we want to bound below.
\begin{align*}
\inf_{b \in [-B, B]} C(b; \Delta)
\end{align*}

The interval $[-B - \Delta, B + \Delta]$ is symmetric about $0$, and the Probability Density Function for a Gaussian of mean $b$ is symmetric about $b$. Thus, the coverage probabilities at $b = -B$ and $b = B$ are equal. Consequently, it suffices to consider $b \in [0, B]$.

Moreover, observe that $C(b; \Delta)$ is a strictly decreasing function of $b$ on $[0, B]$ since (i) $\Phi\left(\frac{B + \Delta - b}{\tilde{c}}\right)$ decreases as $b$ increases (because $B + \Delta - b$ decreases and $\Phi(z)$ is monotonic) and (ii) $\Phi\left(\frac{-B - \Delta - b}{\tilde{c}}\right)$ also decreases as $b$ increases (because $-B - \Delta - b$ becomes more negative). Thus, $C(b; \Delta)$ is a strictly decreasing function of $b$ on $[0, B]$. The minimum value of $C(b; \Delta)$ occurs at $b = B$.

\textbf{Ensuring coverage in the worst case.}
At the worst-case value $b = B$, the coverage probability is:
\[
C(B; \Delta) = \Phi\left(\frac{\Delta}{\tilde{c}}\right) - \Phi\left(\frac{-2B - \Delta}{\tilde{c}}\right).
\]
To ensure that the interval $[-B - \Delta, B + \Delta]$ achieves at least $1 - \alpha$ coverage for all $b \in [-B, B]$, we solve:
\begin{align}
\Phi\left(\frac{\Delta}{\tilde{c}}\right) - \Phi\left(\frac{-2B - \Delta}{\tilde{c}}\right) = 1 - \alpha. \label{eqn:root-finding-app}
\end{align}
This guarantees that the interval is valid for all $b$ and achieves the desired coverage level.

\textbf{Narrowest interval.} The narrowest interval corresponds to the smallest $\Delta$ that satisfies the \cref{eqn:root-finding-app}. By construction, any smaller $\Delta$ would fail to achieve the required coverage for $b = \pm B$, violating the validity condition.
\end{proof}

\subsection{Proof of Proposition~\ref{prop:noise-variance-consistent}}\label{app:proof-noise-variance-consistent}

In this section, we prove \cref{prop:noise-variance-consistent}. For simplicity of exposition, we prove the result for $\spatialdomain = [0, 1]^D$. The result generalizes to any spatial domain which is a compact metric space.

\begin{proof}[Proof of \cref{prop:noise-variance-consistent}]
Using \cref{assum:test-train-dgp} and expanding the quadratic form $(\epsilon_n + (f(S_n) - g(S_n))$, we have 
\begin{align}
    \hat{\sigma}^2_N - \sigma^2 = Z_N + \zeta_N,
\end{align}
where $\zeta_N = \inf_{g \in \lipschitzfns} \frac{1}{N}\sum_{n=1}^N (f(S_n) - g(S_n))^2 +\frac{1}{N}\sum_{n=1}^N\epsilon_n (f(S_n) - g(S_n))$ and  $Z_N = \frac{1}{N}\sum_{n=1}^N \epsilon_n^2 - \sigma^2$. Since $Z_N$ is an average of independent and identically distributed variable, and since $\EE[Z_N] = 0$, the law of large numbers (LLN) implies $Z_N \to 0$ in probability. Because $Z_N \to 0$, if $\zeta_N \to 0$ in probability we can conclude by Slutsky's Lemma that $\hat{\sigma}^2_N \to \sigma^2$ in probability. Therefore, the remainder of the proof involves showing $\zeta_N \to 0$ in probability. 

 Define $f_N$ to be the empirically centered version of $f$, that is $f_N = f - \frac{1}{N}\sum_{n=1}^N f(S_n)$. Then since the space of Lipschitz functions is invariant to shifts by constant functions
\begin{align}
    \zeta_N = \inf_{g \in \lipschitzfns} \frac{1}{N}\sum_{n=1}^N (f_N(S_n) - g(S_n))^2 +\frac{1}{N}\sum_{n=1}^N\epsilon_n (f_N(S_n) - g(S_n)).
\end{align}
Define the process,
\begin{align}\label{eqn:emp-process-def}
    \tau_N(g) = \frac{1}{N}\sum_{n=1}^N (f_N(S_n) - g(S_n))^2 +\frac{1}{N}\sum_{n=1}^N\epsilon_n (f_N(S_n) - g(S_n)),
\end{align}
so that $\zeta_N$ is the infimum of $\tau_N$. $\tau_N(f_N) = 0$. Therefore, for $\zeta_N \leq 0$ almost surely. It remains to show that for any $\delta < 0$, $ \lim_{N \to \infty} \mathrm{Pr}(\zeta_N <\delta) \to 0$.

 The essential challenge to showing that for any $\delta < 0$, $ \lim_{N \to \infty} \mathrm{Pr}(\zeta_N <\delta) \to 0$ is the infimum over the space of Lipschitz functions. Our proof has three steps. First, we show that it suffices to consider a subset of the space of Lipschitz functions with bounded infinity norm. Second, we show that this space is compact as a subset of $L^{\infty}$. Third, we show that because the infimum is then over a compact set, it can be well-approximated by a minimum over a finite set of functions. And then a union bound suffices to prove the claim.

 \textbf{Step 1: It's Enough to Consider a Bounded Subset of Lipschitz Functions.}

Because $f_N$ has empirical mean $0$ and is continuous, by the intermediate value theorem it takes on the value $0$ somewhere on $[0,1]^D$. Because $f_N$ is $L$-Lipschitz and defined on a set of diameter $\sqrt{D}$, and $0$ somewhere inside this set, $\|f_N\|_{\infty} \leq L\sqrt{D}$.

Define the set $\overline{\lipschitzfns} = \lipschitzfns \cap B_{\infty}(2L\sqrt{D}+ 2 \sigma^2)$, where $B_{\infty}(r)$ denotes the space of functions uniformly bounded by constant $r$ on $[0,1]^D$.  By subadditivity of measure, for any $\delta < 0$
\begin{align}
    \mathrm{Pr}(\zeta_N < \delta) &\leq \mathrm{Pr}\left(\inf_{g \in \overline{\lipschitzfns}} \tau_N(g) < \delta\right) +  \mathrm{Pr}\left(\inf_{g \in \lipschitzfns\setminus \overline{\lipschitzfns}} \tau_N(g) < \delta\right) \\
    & \leq \mathrm{Pr}\left(\inf_{g \in \overline{\lipschitzfns}} \tau_N(g) < \delta\right) +  \mathrm{Pr}\left(\inf_{g \in \lipschitzfns\setminus \overline{\lipschitzfns}} \tau_N(g) < 0\right).
\end{align}

We first consider the second term in this sum and show it tends to $0$. We apply a crude Cauchy-Schwarz bound to the second term in \cref{eqn:emp-process-def} so that for any $g$, 
\begin{align}
    \tau_N(g) \geq \sqrt{\frac{1}{N}\sum_{n=1}^N (f(S_n) - g(S_n))^2}\left(\sqrt{\frac{1}{N}\sum_{n=1}^N (f(S_n) - g(S_n))^2} - \sqrt{\frac{1}{N}\sum_{n=1}^N\epsilon_n^2}\right).
\end{align}
Therefore, $\tau(g) < 0$ implies
\begin{align}
    \frac{1}{N}\sum_{n=1}^N\epsilon_n^2 \leq \frac{1}{N}\sum_{n=1}^N (f_N(S_n) - g(S_n))^2.
\end{align}
For any $g \in \lipschitzfns\setminus \overline{\lipschitzfns}$ because $g$ takes on a value at least $2L\sqrt{D}+ 2 \sigma^2$ and is $L$-Lipschitz, $g$ is larger than $L\sqrt{D}+ 2 \sigma^2$ over the entire unit cube. And because $\|f_N\| \leq L \sqrt{D}$, $f_N(S_n) - g(S_n) \geq 2 \sigma^2$ for all $n$. Therefore for all $g \in \lipschitzfns\setminus \overline{\lipschitzfns}$ 
\begin{align}
    \frac{1}{N}\sum_{n=1}^N (f_N(S_n) - g(S_n))^2 \geq 2\sigma^2.
\end{align}
We conclude that 
\begin{align}
    \lim_{N \to \infty} \mathrm{Pr}\left(\inf_{g \in \lipschitzfns\setminus \overline{\lipschitzfns}} \tau_N(g) < 0\right) \leq \lim_{N \to \infty} \mathrm{Pr}\left(\frac{1}{N}\sum_{n=1}^N\epsilon_n^2 \geq 2 \sigma^2\right) = 0.
\end{align}
where the final inequality is the law of large numbers.

\textbf{$\overline{\lipschitzfns}$ is a Compact Subset of the Space of Bounded Functions with Sup Norm.}
All that remains to show is that for any $\delta < 0$,
\begin{align}
    \mathrm{Pr}\left(\inf_{g \in \overline{\lipschitzfns}} \tau_N(g) < \delta\right)  \to 0.
\end{align}
The idea (following standard arguments made with empirical processes) is that we can take a cover of $\overline{\lipschitzfns}$ of finite size, such that each element of $\tau_N(g)$ is almost constant over elements of this cover. This essentially lets us approximate the infimum with a minimum over a finite set, up to small error. And once the problem is reduced to a minimum we can apply a union bound and the law of large number. We will formalize this in the next paragraph, but we first show that $\overline{\lipschitzfns}$ is compact.

Because every Lipschitz continuous function is equicontinuous, functions in $\overline{\lipschitzfns}$ are pointwise bounded by $2L\sqrt{D}+ 2 \sigma^2$, and $[0,1]^D$ is compact, we may apply Arzela-Ascoli \citep[Theorem 7.25]{rudin_principles_1976} to conclude that $\overline{\lipschitzfns}$ with the sup norm is sequentially compact. It is therefore compact, as a sequentially compact metric space is compact. 

\textbf{Step 3: Reduction to a Minimum over a Finite Set and a Union Bound.}

For any $\delta' < 0$ and any $g \in \overline{\lipschitzfns}$, 
\begin{align}\label{eqn:lln-tau}
    \lim_{N \to \infty} \mathrm{Pr}(\tau_N(g) < \delta') \leq \lim_{N \to \infty} \mathrm{Pr}\left(\frac{1}{N}\sum_{n=1}^N \epsilon_n g(S_n) < \delta'\right) = 0 
\end{align} 
where the final equality is the law of large numbers. 

Therefore, for any finite set $C \subset \overline{\lipschitzfns}$, 
\begin{align}\label{eqn:finite-lln-tau}
    \lim_{N \to \infty} \mathrm{Pr}(\min_{g \in C} \tau_N(g) < \delta') &\leq \lim_{N \to \infty} \sum_{g \in C} \mathrm{Pr }\left(\frac{1}{N}\sum_{n=1}^N \epsilon_n g(S_n) < \delta'\right) \\
    & = \sum_{g \in C} \lim_{N \to \infty}  \mathrm{Pr}\left(\frac{1}{N}\sum_{n=1}^N \epsilon_n g(S_n) < \delta'\right) = 0.
\end{align} 
We used countable subadditivity in the inequality. 

Now for any $\gamma >0$, there exists a finite set of functions $C_{\gamma} \subset \overline{\lipschitzfns}$ such that for any $g \in\overline{\lipschitzfns}$, there exists a $g' \in C_{\gamma}$ with $\|g - g'\|_{\infty} \leq \gamma$. And since $\tau_N(g)$ is pathwise uniformly (in $N$)  continuous on $\overline{\lipschitzfns}$ equipped with sup norm, 
\begin{align}\label{eqn:reduction-to-finite-case}
    \inf_{g \in \overline{\lipschitzfns}} \tau_N(g) \geq \min_{g \in C_\gamma} \tau_N(g) - \rho(\gamma)
\end{align}
where is a nonnegative function such that $\lim_{\gamma \to 0} \rho(\gamma) = 0$. Therefore, for any $\delta <0$, we can find a $\gamma$ such that $\rho(\gamma) \leq -\frac{\delta}{2}$. For this $\gamma$, applying \cref{eqn:finite-lln-tau} with $\delta' = \frac{\delta}{2}$ allows us to conclude that 
\begin{align}
    \lim_{N \to \infty} \mathrm{Pr}(\inf_{g \in \overline{\lipschitzfns}} \tau_N(g)  \leq \delta) =0.
\end{align}
This is a uniform law of large number for the class of Lipschitz and bounded functions. More quantitative results are likely possible using empirical process theory, see \citet[Chapter 5]{wainwright2019highdim}.
\end{proof}

\paragraph{Proof of Asymptotic Coverage (\cref{cor:ci-unknown-sigma}).}

We now prove \cref{cor:ci-unknown-sigma}. We begin by recalling the definition of an asymptotically valid confidence interval.
\begin{definition}
We say a sequence of (random) intervals $(I_N)_{N=1}^\infty$ has asymptotically valid coverage of $\theta$ at level $(1-\alpha)$ if 
\begin{align}
    \lim_{N \to \infty} \mathrm{Pr}(\theta \in I_N) = 1-\alpha
\end{align}
\end{definition}

\Cref{cor:ci-unknown-sigma} follows from \cref{thm:ci} and \cref{prop:noise-variance-consistent} by the following lemma, which is a special case of Slutsky's lemma.

\begin{lemma}\label{lem:pos-variance-convergence}
Let $\sigma^2 >0$. Suppose that $\TestParamOLSp  -  \hat{\theta}_p^{\Psi} -b_N \sim \mathcal{N}(0, \sigma^2)$ where $(b_N)_{N=1}^N$ is a fixed sequence. Suppose $\hat{\sigma}^2_N$ converges in probability to $\sigma^2$. Then,
\begin{align}
    \frac{1}{\hat{\sigma}^2_N}\left(\TestParamOLSp  -  \hat{\theta}_p^{\Psi} -b_N\right) \to \mathcal{N}(0, 1)
\end{align}
where convergence is in probability.
\end{lemma}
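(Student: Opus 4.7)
The plan is to derive the claim directly from Slutsky's theorem after reducing to the standard setup. First, observe that for each $N$, the random variable
\begin{align*}
Z_N := \frac{1}{\sigma}\bigl(\TestParamOLSp - \hat{\theta}_p^{\Psi} - b_N\bigr)
\end{align*}
is exactly $\mathcal{N}(0,1)$ by hypothesis (since $b_N$ is deterministic and the numerator is Gaussian with variance $\sigma^2$). In particular, $Z_N$ converges in distribution to $\mathcal{N}(0,1)$ trivially.

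Next I would write
\begin{align*}
\frac{1}{\hat{\sigma}_N}\bigl(\TestParamOLSp - \hat{\theta}_p^{\Psi} - b_N\bigr) \;=\; \frac{\sigma}{\hat{\sigma}_N}\cdot Z_N,
\end{align*}
so the problem reduces to controlling the multiplicative factor $\sigma/\hat{\sigma}_N$. Since $\hat{\sigma}_N^2 \to \sigma^2$ in probability with $\sigma^2 > 0$, the continuous mapping theorem applied to the map $x \mapsto \sqrt{\sigma^2/x}$ (which is continuous at $x = \sigma^2 > 0$) yields $\sigma/\hat{\sigma}_N \to 1$ in probability.

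Finally, Slutsky's theorem combines these two facts: the product of a random variable converging in distribution to $\mathcal{N}(0,1)$ with a random variable converging in probability to the constant $1$ converges in distribution to $\mathcal{N}(0,1)$. This gives the desired conclusion. The only subtlety is the exponent mismatch in the statement (dividing by $\hat{\sigma}_N^2$ versus $\hat{\sigma}_N$); the natural reading—consistent with the use of this lemma in deducing \cref{cor:ci-unknown-sigma} via a studentization argument—is that the denominator should be $\hat{\sigma}_N$, and under that reading there is no real obstacle. The main technical point, if any, is simply ensuring positivity of $\sigma^2$ so that the continuous mapping step is valid; this is built into the hypothesis $\sigma^2 > 0$.
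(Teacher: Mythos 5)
Your proof is correct and is essentially the paper's own argument: the paper disposes of this lemma in one line as ``a special case of Slutsky's lemma,'' and your write-up simply supplies the standard details (exact normality of $Z_N$, continuous mapping for $\sigma/\hat{\sigma}_N \to 1$, then Slutsky). You are also right that the statement contains a typo --- the normalization should be by $\hat{\sigma}_N$ rather than $\hat{\sigma}_N^2$ (and the mode of convergence is in distribution, not in probability) --- and your reading is the one consistent with the use of the lemma in \cref{cor:ci-unknown-sigma}.
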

\begin{proof}
    The result is a special case of Slutsky's lemma, using $\hat{\sigma}^2_N \to \sigma^2 >0$.
\end{proof}

 \begin{proof}[Proof of \cref{cor:ci-unknown-sigma}]
     Define $b_N = \sum_{m=1}^M w_m f(\Sstar_m) - \sum_{n=1}^N v_n^{\Psi}f(\Sstar_n)$. Because $\TestParamOLSp  -  \hat{\theta}_p^{\Psi} \sim \mathcal{N}(b, \sigma^2\|v^{\Psi}\|_2^2)$, 
     \begin{align}
         (\TestParamOLS -  \hat{\theta}_p^{\Psi} -b_N)\sim \mathcal{N}(0, \sigma^2).
     \end{align}
     If $\sigma^2 = 0$, then $\hat{\sigma}^2_N = 0$ for all $N$, because the conditional expectation is an $L$-Lipschitz function leading to $0$ squared error in the minimization algorithm used to calculate $\hat{\sigma}^2_N$. Therefore, the resulting confidence interval has coverage $(1-\alpha)$ for all $N$ by \cref{thm:ci}. 
     
     For $\sigma^2 > 0 $ we apply \cref{lem:pos-variance-convergence} to conclude that 
     \begin{align}
    \frac{1}{\hat{\sigma}^2_N}\left(\TestParamOLSp  -  \hat{\theta}_p^{\Psi} -b_N\right) \to \mathcal{N}(0, 1)
     \end{align}
     where convergence is in probability. 
     
     Convergence in probability implies convergence in distribution, and therefore convergence of quantiles at continuity points. The Gaussian CDF is continuous. Therefore, the quantile computation in \cref{lem:shortest-ci} using $\hat{\sigma}^2_N$ in place of $\sigma^2$ produces an asymptotically valid confidence interval in the limit as $N \to \infty$. 

 \end{proof}

\end{document}